
\documentclass[10pt,journal,compsoc]{IEEEtran}
%


%

%
\ifCLASSOPTIONcompsoc
  \usepackage[nocompress]{cite}
\else
  \usepackage{cite}
\fi

%
\ifCLASSINFOpdf
\else
\fi
\usepackage{epsfig}
\usepackage{helvet}
\usepackage{courier}
\usepackage{floatrow}
\usepackage{multirow}
\newfloatcommand{capbtabbox}{table}[][\FBwidth]
\usepackage{comment}
\usepackage[utf8]{inputenc}
\usepackage[T1]{fontenc}
\usepackage{amsfonts}
\usepackage{units}
\usepackage{microtype}
\usepackage[dvipsnames]{xcolor}
\usepackage{soul}
\usepackage{url}
\usepackage[small]{caption}
\usepackage{graphicx}
\usepackage{amsmath}
\usepackage{booktabs}
\usepackage{epstopdf}
\usepackage{amsthm}
\usepackage{amssymb}
\urlstyle{same}
\usepackage{subfigure}
\usepackage{wrapfig}

\newtheorem{myTheo}{Theorem}

\newtheorem{thm}{Theorem}[section]

\usepackage{bbding}
\usepackage{enumitem}
\usepackage{float}
\usepackage{multicol}
\usepackage[colorlinks,            
    linkcolor=red,
    anchorcolor=blue, 
    citecolor=green
]{hyperref}

\usepackage{colortbl}

\usepackage[ruled,vlined]{algorithm2e}
\usepackage{algpseudocode}


\hyphenation{op-tical net-works semi-conduc-tor}

\usepackage[misc]{ifsym}

\usepackage[numbers]{natbib}

\begin{document}
%
\title{BiVM: Accurate Binarized Neural Network for Efficient Video Matting}
%
%
%
%

\author{Haotong~Qin, 
Xianglong~Liu,~\IEEEmembership{Senior~Member,~IEEE},
Xudong~Ma,
Lei~Ke,
Yulun~Zhang,
Jie~Luo,
Michele~Magno,~\IEEEmembership{Senior~Member,~IEEE}
\IEEEcompsocitemizethanks{
\IEEEcompsocthanksitem H. Qin and M. Magno are from ETH Zurich. X. Ma and J. Luo are from Beihang University. X. Liu (corresponding author, email: xlliu@buaa.edu.cn) is from the State Key Laboratory of Complex \& Critical Software Environment, Beihang University. L. Ke is from Carnegie Mellon University. Y. Zhang is from Shanghai Jiao Tong University.
}
}

\markboth{Journal of \LaTeX\ Class Files,~Vol.~14, No.~8, August~2015}%
{Shell \MakeLowercase{\textit{et al.}}: Bare Demo of IEEEtran.cls for Computer Society Journals}
%



\IEEEtitleabstractindextext{%
\begin{abstract}
Deep neural networks for real-time video matting suffer significant computational limitations on edge devices, hindering their adoption in widespread applications such as online conferences and short-form video production. Binarization emerges as one of the most common neural network compression approaches, substantially curbing computational and memory requirements through compact 1-bit parameters and efficient bitwise operations. However, the empirical observation reveals that accuracy and efficiency limitations exist in the binarized video matting network due to its degenerated encoder and redundant decoder. Following a theoretical analysis based on the information bottleneck principle, the limitations are mainly caused by the degradation of prediction-relevant information in the intermediate features and the redundant computation in prediction-irrelevant areas. We present \textbf{BiVM}, an accurate and resource-efficient \textbf{Bi}narized neural network for \textbf{V}ideo \textbf{M}atting, where architecture and optimization allow real-time video matting to proceed on edge hardware. First, we present a series of binarized computation structures with elastic shortcuts and evolvable topologies, enabling the constructed encoder backbone to extract high-quality representation from input videos for accurate prediction. Second, we sparse the intermediate feature of the binarized decoder by masking homogeneous parts, allowing the decoder to focus on representation with diverse details while alleviating the computation burden for efficient inference. Furthermore, we construct a localized binarization-aware mimicking framework with the information-guided strategy, prompting matting-related representation in full-precision counterparts to be accurately and fully utilized. Comprehensive experiments show that the proposed BiVM surpasses alternative binarized video matting networks, including state-of-the-art (SOTA) binarization methods, by a substantial margin. For example, BiVM surpasses 16.67 MAD compared to SOTA binarization on the VM dataset. Notably, our approach can even perform comparably to the full-precision counterpart in terms of visual quality. Moreover, our BiVM achieves significant savings of 14.3$\times$ and 21.6$\times$ in computation and storage costs, respectively. We also evaluate BiVM on ARM CPU hardware, underscoring its potential for deployment in resource-constrained scenarios. 
\end{abstract}

\begin{IEEEkeywords}
Video Matting, Binarization, Model Compression, Low-bit Quantization
\end{IEEEkeywords}
}

\maketitle

\IEEEdisplaynontitleabstractindextext

%
\IEEEpeerreviewmaketitle

\IEEEraisesectionheading{\section{Introduction}}
\label{sec:intro}
\IEEEPARstart{T}{he} remarkable progress of deep neural networks has revolutionized computer vision tasks, notably in video matting (VM)~\cite{lin2022robust,aksoy2017designing,bai2007geodesic,chen2013knn,chuang2001bayesian,feng2016cluster,lin2022robust,li2022vmformer,sun2021deep,zhang2021attention,lin2021real,sengupta2020background,wang2024matting,yang2025matanyone}. However, the demand for real-time video matting on edge devices in practical applications poses a serious challenge due to the high computational and storage requirements. To overcome this challenge, various lightweight video matting networks like Robust Video Matting (RVM)~\cite{lin2022robust} and BGMv2~\cite{lin2021real} have been developed. Despite achieving notable speedups and memory reductions, these methods rely on resource-intensive floating-point parameters and operations, leaving significant room for additional compression.

\begin{figure}[!t]
  \begin{center}
    \includegraphics[width=1.\textwidth]{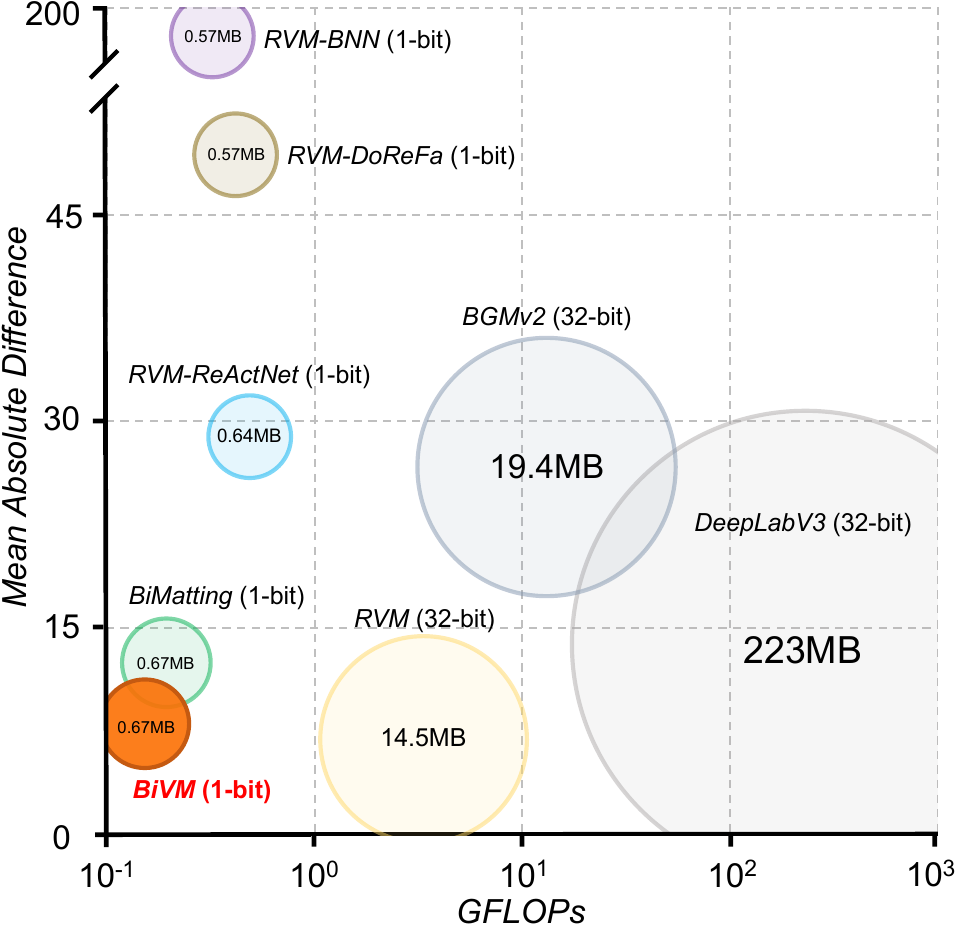}
  \end{center}
  \caption{BiVM achieves impressive resource savings, surpassing SOTA 1-bit and even 32-bit video matting networks in accuracy.}
  \label{fig:speed-size-acc}
\end{figure}

To address the pressing requirements of efficiency, researchers have explored network binarization~\cite{hubara2016binarized,dorefa,XNORNet,XNOR++,xu2021recu,BiReal,liu2020reactnet,shang2022lipschitz,shang2022network,bethge2020meliusnet} as one of the most promising neural network compression approaches. 
As the neural network with the most aggressive bit-width, binarized neural networks (BNNs) utilize compact 1-bit parameters to compress the model size. The binarized parameter can achieve 32$\times$ storage savings compared to the full-precision counterparts. Benefits from the binarized weights and activations, the expensive floating-point operations can be replaced by efficient bitwise ones, \textit{i.e.}, XNOR and POPCNT, during inference, achieving 58$\times$ speedup in 64-bit ARM CPU hardware for standard convolution~\cite{XNORNet}.
Adopting network binarization enables video matting applications to substantially save computational and storage costs and move towards real-time inference on resource-constrained devices.

Despite the extensive exploration of generic binarization methods, the binarized video matting networks still suffer substantial limitations in terms of accuracy and efficiency:
\textbf{For the accuracy limitation}, the direct binarization for the lightweight encoders of video matting networks causes catastrophic accuracy degradation. State-of-the-art (SOTA) video matting networks incorporate encoders with lightweight backbones to extract intermediate representations. For example, the MobileNet families are commonly employed as lightweight encoders in various video matting networks, contributing extracted features with different scales for feeding the following video matting decoders~\cite{lin2022robust,sengupta2020background}. However, directly binarizing these architecture variants leads to almost complete failure in extracting feature functions, which introduces catastrophic accuracy decline or even collapse in the binarized video matting network. The empirical accuracy analysis in Figure~\ref{fig:observation}(a) quantitatively indicates that the binarization of the lightweight backbone of the encoder is the primary cause of the accuracy degradation, implying that the feature extraction functionality is severely impaired after binarization.
\textbf{For the efficiency limitation}, the significant computational redundancy of the decoder is hard to eliminate directly through binarization. In the existing binarized video matting baseline, the decoder's computation is processed on feature maps with multiple scales extracted by the encoder. However, since the application of intensive computation in large portions of certain foreground or background regions (spatial regions outside unknown parts of the trimap) of the high-resolution feature maps, a large amount of redundant computation counts exists in the decoder~\cite{lin2022robust}. Though the computational resources are significantly saved with the reduced bit-width, the efficiency benefits of binarization are weakened due to the constant redundant computation counts. Quantitative efficiency analysis in Figure~\ref{fig:observation}(b) reveals that the redundant decoder hinders efficiency improvements in extremely compressed bit-width. 

In this paper, our empirical studies present the aforementioned accuracy and efficiency challenges for binarizing video matting networks.
Based on the information bottleneck principle, we attribute the primary challenges to 1) the binarized encoder cannot effectively compress prediction-relevant information into the intermediate representation and 2) the significantly redundant computation of the binarized decoder exists in background areas with massive prediction-irrelevant information.
Our observations and analysis lead us to present \textbf{BiVM}, a novel \textbf{Bi}narized neural network designed to enhance the accuracy and efficiency of \textbf{V}ideo \textbf{M}atting.
First, we delve into the limitations of existing binarized encoders in feature extraction and propose the \textit{Evolvable Binarized Block} (EBB), a binarized structure paradigm with evolvable elastic shortcuts. With the improved topologies, EBB allows the constructed lightweight binarized backbone to effectively extract sufficient prediction-relevant representation from input videos to achieve accurate matting.
Second, we introduce the \textit{Sparse Heterogeneous Binarization} (SHB) to diminish the redundant cost of the binarized encoder by skipping the feature computation for background on different scales. SHB reduces redundant prediction-irrelevant computation through spatial sparsification~\cite{transfiner}, resulting in a notable reduction in computation while focusing on the critical information in the representation.
Moreover, we construct an \textit{Localized Binarization-aware Mimicking} (LBM) framework for optimizing the binarized video matting network. LBM is assisted by the full-precision features and guided by the information-maximized mask, prompting the critical knowledge to be accurately learned for the binarized video matting network.

Our BiVM is a novel binarized neural network for efficient video matting that stands out from traditional 1-bit matting models utilizing established binarization algorithms in previous methods~\cite{liu2020reactnet,hubara2016binarized,xu2021recu,dorefa}. BiVM demonstrates significant superiority even compared to some floating-point video matting networks~\cite{sengupta2020background,chen2017rethinking} in terms of accuracy while maintaining impressive efficiency. 
Extensive experiments conducted across fundamental tasks on VideoMatte240K (VM)~\cite{lin2021real}, Distinctions-646 (D646)~\cite{qiao2020attention}, and Adobe Image Matting (AIM)~\cite{xu2017deep} datasets establish that specific tasks do not constrain the advantages of BiVM. 
For example, compared to the video matting network binarized by ReActNet, a SOTA generic binarization algorithm, and BiMatting, BiVM achieves up to 16.67 and 1.00 Alpha-MAD accuracy improvement on the VM dataset, respectively, and even achieves 13.37 gains against the 32-bit BGMv2 network. 
Furthermore, our proposed structures showcase remarkable efficiency, enabling BiVM to achieve noteworthy 14.3$\times$ FLOPs and 21.6$\times$ storage savings compared to its full-precision counterpart and also achieve 13.2$\times$ latency reduction compared to the full-precision counterpart on mobile devices with 64-bit ARM CPUs. This opens up promising possibilities for video matting in edge scenarios (refer to Figure~\ref{fig:speed-size-acc}).

We summarize our main contributions as follows:
\begin{itemize}
\item We empirically reveal the accuracy and efficiency limitations of binarized video matting networks, showcasing the degradation of the binarized encoder backbone and the redundant computation in the binarized decoder.
\item We provide a theoretical perspective for the limitations of the binarized video matting baseline based on the information bottleneck principle, pointing out that the degradation of prediction-relevant information in the intermediate features and the redundant computation in prediction-irrelevant areas mainly hinder the accurate and efficient video matting, respectively.
\item We propose a novel binarized video matting framework, namely BiVM, to push the binarized video matting network to be accurate and efficient through comprehensive improvements of typology and optimization.
\item For the encoder structure, we propose a flexible binarized topology paradigm, namely Evolvable Binarized Block (EBB), to enhance the feature extraction capability of the encoder by evolvable and elastic shortcuts and extract features to prediction-relevant.
\item For the decoder structure, we propose an efficient sparse binarization technique to exclude the prediction-irrelevant information, namely Sparse Heterogeneous Binarization (SHB), allowing the binarized decoder to utilize the representations with low computation.
\item For the training strategy, we design an information-guided optimization scheme, namely Localized Binarization-aware Mimicking (LBM), learning critical representations of the external full-precision counterpart to improve the binarized video matting network.
\item {We present detailed quantitative and visual results to verify the effectiveness of techniques in BiVM regarding accuracy and efficiency.} Comprehensive experiments show that BiVM consistently outperforms video matting networks binarized by SOTA methods in accuracy and presents significant efficiency advantages on both theoretical and edge hardware.
\end{itemize}

Note that our paper expands on the original conference version~\cite{qin2024bimatting}. 
\textbf{1)} This paper further offers a theoretical explanation for the empirical observations based on the information bottleneck principle, pointing out that the lack of prediction-relevant information in intermediate features and the redundant computation of prediction-irrelevant parts are the main reasons for the challenges in accuracy and efficiency faced by the existing binarized video matting network. 
\textbf{2)} This work proposes a novel binarized neural network for accurate and efficient video matting, BiVM, including novel encoder and decoder structures and training strategies. The previous solution proposed in the original conference paper can be seen as a specified baseline method. Specifically, (a) in terms of structural design, we propose a flexible and evolvable information-elastic topology for the encoder with improved feature extraction capabilities compared to the original method. (b) We propose an efficient sparse binarization technique focused on high task-relevant information for the decoder that achieves higher accuracy with less computation than the original method. (c) We also designed a binarization-aware distillation scheme for training guided by the information-maximized mask, which improves the information representation of binarized video matting by focusing critical representations from an external full-precision counterpart. 
\textbf{3)} We further evaluate and analyze the binarized video matting network in detail, including quantitative and visualization results on video and image matting datasets and the visualization analysis of the information plane during the training process, demonstrating significant accuracy and efficiency improvements brought by BiVM. We also conduct the real-world ARM CPU hardware evaluation of the proposed BiVM, demonstrating its efficiency potential in practical deployment scenarios.

\section{Related Work}

\subsection{Video Matting}
Recent advancements in deep learning have significantly propelled the fields of matting forward. For image matting, neural network-based techniques estimate alpha mattes from image feature maps with auxiliary trimap supervisions in an end-to-end manne~\cite{chuang2001bayesian,bai2007geodesic,gastal2010shared,he2011global,chen2013knn,karacan2015image,feng2016cluster,aksoy2017designing,levin2007closed}. 
\textcolor{black}{Some researchers explore trimap-free solutions utilizing segmentation networks~\cite{chen2018semantic,ke2022modnet} or coarse annotations~\cite{liu2020boosting,yu2021mask}.}
Video matting is a recent development that benefits from integrating temporal information to enhance matting quality~\cite{wang2024matting,zou2019unsupervised,wei2023deep,fan2011scribble}.
Trimap-based methods~\cite{sun2021deep,zhang2021attention} focus on spatial-temporal feature aggregation for temporal feature fusion and alignment. 
Trimap-free methods~\cite{sengupta2020background,ke2020green} leverage an auxiliary background image for the initial frame to provide crucial cues for alpha matte prediction. 
BGMv2~\cite{lin2021real} extends this approach by offering high-resolution real-time video matting solutions. 
\textcolor{black}{MODNet~\cite{ke2022modnet} proposes a light-weight and auxiliary-free matting model with neighbor frame smoothing.}
RVM~\cite{lin2022robust} enhances matting quality robustness on real-world data through training on segmentation data. 
\textcolor{black}{MatAnyone~\cite{yang2025matanyone} further advances auxiliary-free video matting with a robust target-assigned framework that employs consistent memory propagation and a novel training strategy on a large diverse dataset, achieving strong performance and detail preservation in real-world scenarios.}
From an architecture perspective, VideoMatt~\cite{li2023videomatt} explores various temporal modeling methods based on CNNs,
and some methods adopt vision transformer architectures~\cite{heo2022vita,wu2022seqformer,li2022vmformer} in the video matting task.

Though many models have been proposed and built, running video matting tasks in real-time under limited resource environments still faces challenges. 
\textcolor{black}{Among existing methods, some lightweight video matting algorithms, such as RVM~\cite{lin2022robust}, MODNet~\cite{ke2022modnet}, and BGMv2~\cite{lin2021real}, have been proposed.}
Although significant speedups and memory reductions are achieved, these video matting methods still rely on resource-intensive floating-point operations, leaving room for further bit-width compression. \cite{qin2024bimatting} proposes the first 1-bit binarized video matting methods, leading a promising way to efficient video matting. However, a non-negligible accuracy degradation is caused by the binarization of the video matting network compared to the full-precision counterpart.

\subsection{Network Quantization}
By considering the bit-width perspective, network quantization emerges as one of the most popular network compression approaches. Two main quantization paradigms exist: Post-Training Quantization (PTQ) and Quantization-Aware Training (QAT)~\cite{Qin_2020_pr,gholami2022survey,qin2023diverse}.

PTQ methods can efficiently reduce the computational overhead of quantizing neural network models without fine-tuning~\cite{dong2019hawq}. For instance, bias correction methods address inherent biases in weight values post-quantization, and techniques like ACIQ~\cite{banner2018aciq} and OMSE~\cite{choukroun2019low} optimize clipping ranges and L2 distance to reduce errors. Outlier channel splitting (OCS)~\cite{zhao2019improving} and adaptive rounding methods like AdaRound~\cite{nagel2020up} and AdaQuant~\cite{hubara2021accurate} enhance PTQ performance by addressing specific quantization challenges. 
PTQ is particularly advantageous for its speed and low overhead, but restricts the accuracy potential of quantized models.
Fortunately, QAT methods allow us to utilize the entire training pipeline to achieve aggressive low-bit quantization~\cite{courbariaux2015binaryconnect,hubara2016binarized,lin2015neural,huang2404empirical,qin2023distribution,zhang2021diversifying,qin2024accurate}. 
In QAT, the forward and backward passes are performed in floating-point precision, while the model parameters are quantized after each gradient update~\cite{XNORNet,tailor2020degree}. 
A key challenge in QAT is dealing with the non-differentiable quantization operator, often addressed by approximating the gradient of the quantization operation~\cite{bengio2013estimating,bai2018proxquant,yin2019understanding,rosenblatt1961principles,fan2020training, zhuang2018towards}. While STE is widely used, other methods and regularization have also been explored~\cite{agustsson2020universally,cai2017deep,BiReal}. Additionally, some QAT methods learn quantization parameters, such as clipping ranges and scaling factors, during training. 
QAT pushes the quantization to lower bit-width and demonstrates promising performance, even to extreme 1-bit binarization~\cite{XNORNet,XNOR++,martinez2020training}.

\begin{figure*}[t]
  \centering
  \includegraphics[width=1.\textwidth]{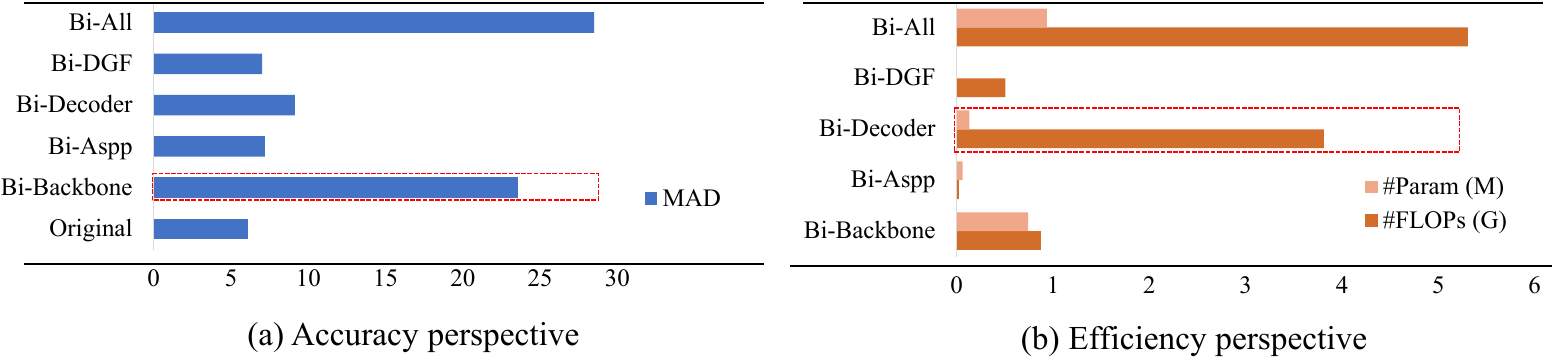}
  \caption{Limitations of the binarized video matting baseline in accuracy and efficiency on the VM dataset. We present (a) the MAD metrics for binarizing each component and (b) the FLOPs and storage requirements for each component in the binarized baseline.}
  \label{fig:observation}
\end{figure*}

\subsection{Binarized Neural Network}
Existing studies have demonstrated that 1-bit quantization, namely binarization, is a highly efficient form of network quantization~\cite{hubara2016binarized,wang2020bidet,wang2021learning,wang2019learning,Qin_2020_pr,wang2023binary,wang2021gradient,bulat2018hierarchical,lin2022siman,qin2022bibert,huang2024billm,}. This technique compresses networks significantly, enabling extreme storage and computational efficiency through the use of 1-bit binarized parameters and bitwise XNOR-POPCNT operators.

As binarization methods evolve, they can compress and accelerate deep neural networks across different neural architectures and learning tasks~\cite{qin2020forward,shang2022lipschitz,cai2024binarized}. Several strategies were proposed to enhance the universal performance of binarized networks. These include minimizing quantization error by reducing information loss during the conversion of 32-bit values to 1-bit~\cite{XNORNet}; improving loss functions to narrow the accuracy discrepancy with full-precision networks~\cite{Regularize-act-distribution} or employing knowledge distillation from full-precision to binarized networks~\cite{du2024bitdistiller}; and designing gradient approximations for backward propagation to match the gradient of the binarization function~\cite{RealtoBin}.
Tailoring specific binarization algorithms or structures for each network and task can enhance performance. This is evident in different network types such as CNNs~\cite{BiReal,liu2020reactnet}, transformer-based~\cite{bai2021binarybert,liu2022bit}, and MLP-based~\cite{xu2021poem} networks. However, for video matting networks, especially those that adopt the lightweight backbone (\textit{e.g.}, MobileNetV3), generic binarization approaches often fall short due to stringent accuracy requirements, and specialized binarized networks still experience significant accuracy gaps.

\section{BiVM: Binarized Video Matting Network}

\subsection{Preliminaries}

We establish a binarization baseline for video matting constructed from an advanced generic binarization approach and a lightweight video matting architecture.

\subsubsection{Network Binarization Framework}
\label{lab:NetworkBinarizationFramework}
The application of binarization mainly involves utilizing the sign function to compress weights and activations to 1-bit values for the computation units within the binarized network~\cite{hubara2016binarized}. The forward process can be expressed as
\begin{equation}
\label{eq:quantized}
\operatorname{sign}(\boldsymbol{x})=
\begin{cases}
+1,& \mathrm{if} \ \boldsymbol{x} \ge 0\\
-1,& \mathrm{otherwise}
\end{cases},
\end{equation}
and the backward process is
\begin{equation}
\label{eq:quantized_back}
\frac{\partial \mathcal{L}}{\partial \boldsymbol{x}}=
\begin{cases}
\frac{\partial \mathcal{L}}{\partial \operatorname{sign}(\boldsymbol{x})},& \mathrm{if} \ x \in \left(-1, 1\right)\\
0,& \mathrm{otherwise}
\end{cases},
\end{equation}
where $\mathcal{L}$ denotes the loss function.
To construct an advanced binarized baseline using existing methods, besides the standard sign function for binarization, we employ floating-point scaling factors for the weight $\boldsymbol{w}$~\cite{XNORNet}, along with adaptable thresholds for the activation $\boldsymbol{a}$~\cite{liu2020reactnet}:
\begin{equation}
\boldsymbol{b_w}=\operatorname{sign}(\boldsymbol{w}),\qquad
\boldsymbol{b_a}=\operatorname{sign}(\boldsymbol{a}-\tau),
\end{equation}
where $\boldsymbol{b_w}$ and $\boldsymbol{b_a}$ denote binarized weight and activation representations, respectively. The bitwise computation is
\begin{equation}
{o}=s\boldsymbol{b_w}\otimes\boldsymbol{b_a},
\end{equation}
where $s$ is the scalar which is obtained by $\operatorname{mean}(|\boldsymbol{w}|)$, while the initial value for layer-wise threshold $\tau$ is set to 0, and $\otimes$ denotes the bitwise convolution operator.
In the backward propagation, Straight-Through Estimator (STE)~\cite{bengio2013estimating} is consistently used across all settings, which approximates the gradient of the sign function as a clip function (as the backward process in Eq.~\eqref{eq:quantized_back}). 
Following the common practice, binarization is excluded in the computational layers that interact with original inputs or generate final outputs~\cite{BiReal,qin2023bibench}.

\subsubsection{Video Matting Architecture}
\label{lab:VideoMattingArchitecture}
The goal of the matting network is to decompose a frame $I$ into a foreground $F$ and a background $B$. This is achieved by utilizing an $\alpha$ coefficient to express the linear combination of the two of them~\cite{li2022vmformer,lin2022robust}, which can be expressed as
\begin{equation}
I=\alpha F+(1-\alpha) B.
\end{equation}
In contrast to various image matting approaches~\cite{aksoy2017designing,bai2007geodesic,chen2013knn,chuang2001bayesian,feng2016cluster,grady2005random,chen2018semantic,liu2020boosting,yu2021mask}, matting techniques specifically designed for video are anticipated to exhibit enhanced efficacy by leveraging spatial-temporal information inherent~\cite{lin2022robust,li2022vmformer,sun2021deep,zhang2021attention,lin2021real,sengupta2020background,ke2020green,gu2022factormatte}. Among them, the recent state-of-the-art Robust Video Matting (RVM)~\cite{lin2022robust} attains superior accuracy in video matting tasks, showcasing remarkable efficiency.

Despite being one of the most lightweight video matting models, RVM architecture still holds substantial potential for bit-width compression, given the considerable expenses associated with floating-point parameters and computations. Specifically, the RVM is primarily constructed by a streamlined encoder based on MobileNetV3 (comprising both the backbone and Atrous Spatial Pyramid Pooling (ASPP) module~\cite{chen2017deeplab}). As for the backbone, we can express the blocks in binarized MobileNetV3 as follows:
\begin{equation}
\phi_\textrm{mbv3}(\boldsymbol{x})=\operatorname{bconv}_1 (\operatorname{bconv}_n^\textrm{G}(\operatorname{bconv}_1 (\boldsymbol x)))+[c_{\boldsymbol{x}}=c_{\boldsymbol{o}}]\boldsymbol x,
\end{equation}
where $\operatorname{bconv}$ denotes binarized convolution, the subscript $n$ (or a specific number) denotes the kernel size, the superscript $\operatorname{G}$ denotes the groupwise convolution, and $\boldsymbol x$ denotes the input. The kernel size is denoted by $n$ where $n$ takes values from the set $\{5, 3\}$. Additionally, $[\cdot]$ represents the \textit{Iverson bracket}~\cite{iverson1962programming}, which results in 1 when the condition within the parentheses is true and 0 otherwise. The activation layers and batch normalization following convolutions are excluded. Except for the encoder, the binarized video matting baseline includes a recurrent decoder and a deep guided filter (DGF) module~\cite{wu2018fast}, where the convolutions in these parts are also binarized correspondingly for efficient computation.

\subsection{Challenges of Video Matting Network Binarization}
While we construct a binarized video matting network baseline and its training pipeline using standard techniques, this approach still suffers significant limitations in accuracy and efficiency, which impede its practical application.

\subsubsection{Limitations of Accuracy and Efficiency}
\label{subsec:BottlenecksofBinarized}
We aim to attain accurate and efficient video matting through binarization, focusing on practicality and resource utilization. Nevertheless, the encoder and decoder of binarized baseline suffer from accuracy and efficiency limitations.

\textbf{Accuracy Limitation:} In terms of accuracy, Figure~\ref{fig:observation}(a) illustrates the accuracy drops on the VideoMatting (VM) dataset by binarizing different parts in the RVM model. Notably, binarizing the lightweight MobileNetV3 backbone in the encoder causes the most significant accuracy drop, nearly equivalent to direct binarization of the entire network (Bi-All 28.49 \textit{vs.} Bi-Backbone 23.56 for MAD metric). In contrast, binarizing ASPP, decoder, and DGF parts has a lesser impact on accuracy (less than a 3.05 MAD increase). Hence, prioritizing improvement in the encoder's backbone for better binarization compatibility becomes essential to address accuracy drops.

\textbf{Efficiency Limitation:} Figure~\ref{fig:observation}(b) showcases the computation and storage usage for each part of the binarized video matting network to highlight individual efficiency consumption. Efficiency analysis reveals that the decoder utilizes a significant amount of computational resources, even constituting 71.8\% (0.38G) FLOPs with only 12.1\% parameters with binarization. On the contrary, the binarized encoder has a higher parameter count (81.8\%), but its FLOP consumption is only 16.6\%. Minimal consumption is observed for other parts. These observations indicate that the decoder's computational redundancy significantly impacts overall model acceleration performance, leaving room for computational optimization even after direct binarization.

Inspired by the aforementioned empirical observations, two significant challenges emerge in the current baseline: 1) the current lightweight backbone in the encoder is unsuitable for direct binarization, hindering the generation of practical representations, and 2) the decoder exhibits considerable computational redundancy after binarization. Consequently, we propose a novel binarized video matting network that includes a binarization-friendly encoder, a computationally efficient decoder, and an effective optimization scheme for accurate and efficient video matting.

\begin{figure}[!t]
  \begin{center}
    \includegraphics[width=1.\textwidth]{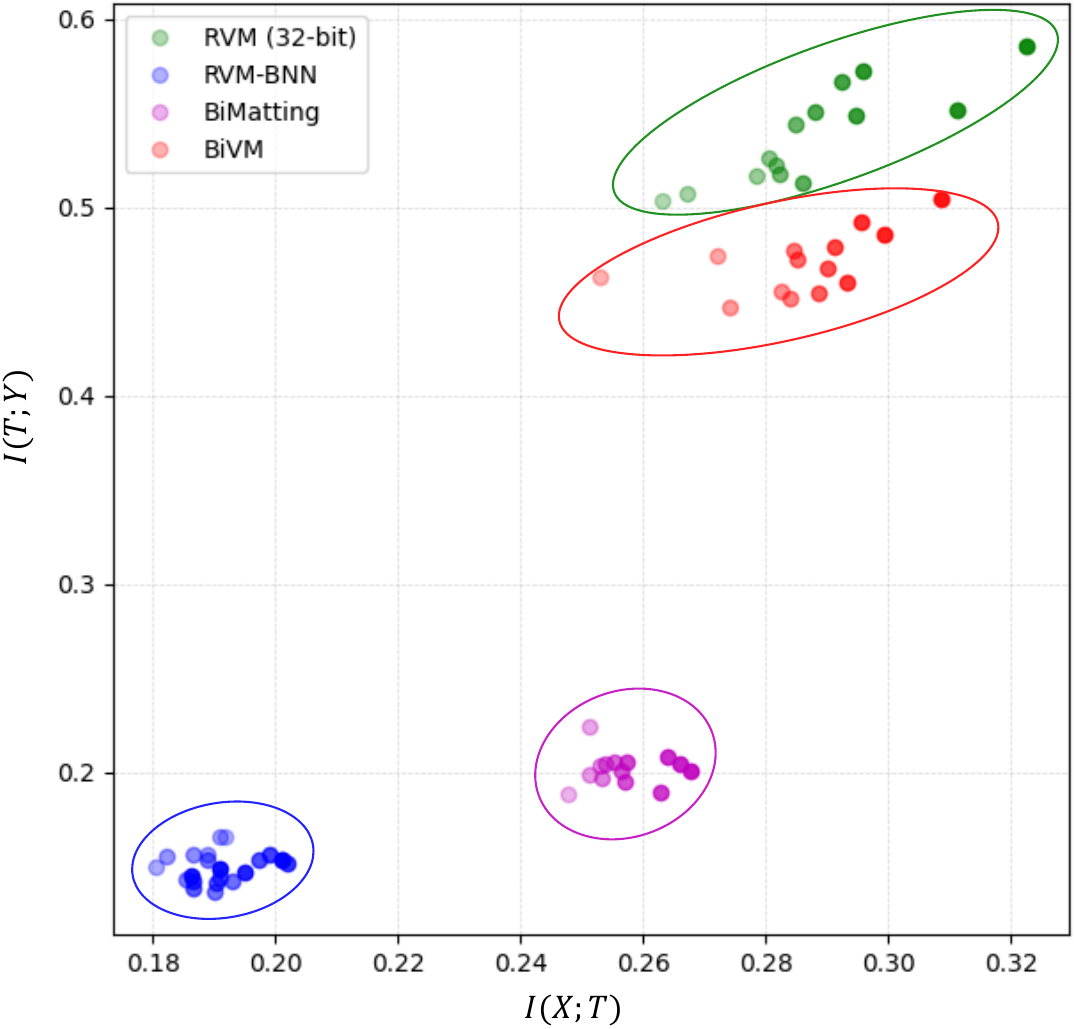}
  \end{center}
  \caption{Information planes of RVM, RVM-BNN, BiMatting, and BiVM on test sets. Darker dots represent later epochs.}
  \label{fig:infor_plane}
\end{figure}

\subsubsection{Analysis from the Information Perspective}
\label{subsec:AnalysisfromtheInformationPerspective}
We follow the Information Bottleneck (IB) principle to explain the performance degradation of video matting networks caused by binarization and to inspire designs that improve accuracy and efficiency. A video matting network $M$ can be viewed as a Markov chain $Y \rightarrow X \rightarrow T$, where $X$ and $Y$ are the video input and the ground truth of the matting, respectively. The intermediate representation $T$ is defined by the encoder $\operatorname{P}_{\theta_e}(T|X)$ and the decoder $\operatorname{P}_{\theta_d}(Y|T)$, where $\theta_e$ and $\theta_d$ represent the parameters of the encoder and decoder, respectively.
The goal of training the video matting network is to learn the optimal representation $T$ of $X$ with respect to $Y$, which can be expressed as:
\begin{equation}
T = \underset{S(X): I(S(X); Y) = I(X; Y)}{\arg \min} I(S(X); X).
\end{equation}
In this context, a sufficient statistic of $X$, denoted by $S(X)$, refers to a partition of $X$ that retains all the information that $X$ contains about $Y$, meaning that $I(S(X); Y) = I(X; Y)$. The Information Bottleneck (IB) principle relaxes this objective to find the optimal balance between compressing $X$ and accurately predicting $Y$. The goal is to transmit all the relevant information from $X$ about $Y$ through a bottleneck created by the compressed representation $T$~\cite{slonim2002information, shwartz2022information}. Thus, the task of determining the compressed representation $T$ of $X$ translates into minimizing a Lagrangian objective:
\begin{equation}
\label{eq:ib}
\min_{\theta_e, \theta_d} I(X; T) - \beta I(T; Y),
\end{equation}
where $\beta$ is a Lagrange multiplier. For a binarized video matting network, the intermediate representation $\hat{T}$ is defined by the encoder $\operatorname{P}_{\hat{\theta_e}}(T|X)$ and the decoder $\operatorname{P}_{\hat{\theta_d}}(Y|T)$, where $\hat{\theta_e}$ and $\hat{\theta_d}$ represent the parameters of the binarized encoder and decoder, respectively.

The binarized video matting network suffers a significantly degenerated representation capability compared to its full-precision counterpart, hindering the prediction of $Y$ given $X$. Figure~\ref{fig:infor_plane} shows the changes in $I(X; T)$ and $I(T; Y)$ during the training process of the full-precision and the binarized video matting networks (RVM-BNN, BiMatting, and our BiVM), which follows~\cite{shwartz2017opening,wang2020bidet} and covers epochs of four matting training stages. In Figure~\ref{fig:infor_plane}, $I(X; T)$ and $I(T; Y)$ of RVM-BNN are correspondingly much lower than in the full-precision network. This suggests that in directly binarized networks, the information in $X$ is extremely compressed during encoding, and the extracted features $T$ contain less information related to the prediction $Y$ and/or are not correctly decoded.

\begin{myTheo}
\label{theo:th2}
Let $X$ be a normally distributed random variable. 
Consider the two functions
$\hat{T} = f(\operatorname{sign}(X))$ and $T = f(X)$,
where $\operatorname{sign}(\cdot)$ denotes the sign function and $f(x) = ax + b\ (a \neq 0)$, then we have $I(X; \hat{T}) \ll I(X; T)$.
\end{myTheo}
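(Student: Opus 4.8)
The plan is to reduce both mutual informations to information quantities of the raw signals by exploiting the invariance of mutual information under invertible transformations. Since $f(x)=ax+b$ with $a\neq 0$ is an affine bijection, applying it to either argument preserves mutual information, so that $I(X;T)=I(X;f(X))=I(X;X)$ and $I(X;\hat{T})=I(X;f(\operatorname{sign}(X)))=I(X;\operatorname{sign}(X))$. This removes $f$ from the picture entirely and isolates the only structural difference between the two branches, namely the presence of the $\operatorname{sign}$ nonlinearity.

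First I would bound the binarized branch. Because $X$ is continuous, $P(X=0)=0$ and $\operatorname{sign}(X)$ is an (almost surely) binary deterministic function of $X$. For a deterministic discrete function $g$ one has $I(X;g(X))=H(g(X))$, so $I(X;\hat{T})=H(\operatorname{sign}(X))\le\log 2$: the binarized representation can carry at most one bit of information about $X$, irrespective of the mean and variance of the Gaussian.

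Next I would evaluate the full-precision branch. Here $T=aX+b$ determines $X$ exactly, so $T$ is an invertible continuous transform of $X$ and $I(X;T)=h(T)-h(T\mid X)$ with $h(T\mid X)=-\infty$, whence $I(X;T)=+\infty$. Equivalently, the joint law of $(X,T)$ is supported on the line $\{t=ax+b\}$ and is singular with respect to the product $p_X\,p_T$, so the Kullback--Leibler form of the mutual information diverges. Comparing the two branches, $I(X;\hat{T})\le\log 2$ while $I(X;T)=+\infty$, which yields $I(X;\hat{T})\ll I(X;T)$.

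The delicate point, and the part I would treat most carefully, is the continuous identity $I(X;T)=\infty$, since an unbounded right-hand side makes the symbol $\ll$ informal. To make the gap precise and quantitative I would pass through a finite quantization: replace $X$ by its $\Delta$-quantization $X_\Delta$, for which $I(X_\Delta;T)=H(X_\Delta)\approx h(X)-\log\Delta\to\infty$ as the resolution $\Delta\to 0$, while the binarized side still obeys $I(X_\Delta;\hat{T})\le H(\operatorname{sign}(X))\le\log 2$ uniformly in $\Delta$. Thus at any fidelity finer than one bit the full-precision representation strictly dominates, and letting $\Delta\to 0$ recovers the stated inequality in its limiting form; this also supplies the concrete interpretation that binarization caps transmittable information at a single bit whereas the affine map preserves it in full.
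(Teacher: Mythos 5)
Your proof is correct, and it reaches the paper's conclusion by a cleaner route on the binarized branch. The paper's own proof handles $I(X;T)$ exactly as you do — $T=aX+b$ is invertible, so $I(X;T)=H(X)$, which is evaluated as $\lim_{\Delta\to 0}H(X_\Delta)=h(X)-\lim_{\Delta\to 0}\log\Delta=\infty$, with a closing remark that a 32-bit representation of $X$ keeps this finite but still enormous; your quantization argument is essentially the same device and is the right way to make ``$\ll$'' meaningful. Where you differ is on $I(X;\hat T)$: you strip off $f$ by invariance of mutual information under the bijection $f$, reducing the problem to $I(X;\operatorname{sign}(X))=H(\operatorname{sign}(X))\le\log 2$ in one line. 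The paper instead keeps $f$ in place, writes $\hat T$ as a product involving $\operatorname{sign}(X)$, decomposes $X=|X|\cdot\operatorname{sign}(X)$, and applies the data processing inequality plus the chain rule to bound $I(X;\hat T)\le I(|X|;\hat T)+I(\operatorname{sign}(X);\hat T\,|\,|X|)\le 2\log 2$. Your route buys two things: a tighter constant ($\log 2$ versus $2\log 2$), and immunity to an inconsistency in the paper's write-up, which at one point treats $\hat T$ as $(aX+b)\cdot\operatorname{sign}(X)$ rather than the stated $a\operatorname{sign}(X)+b$ — a quantity for which the intermediate bound $I(|X|;\hat T)\le\log 2$ would not generally hold (e.g.\ with $b=0$ one could recover $|X|$ exactly from $(aX)\operatorname{sign}(X)=a|X|$). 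For the theorem as actually stated, $\hat T$ takes at most two values, so your direct entropy bound is both simpler and airtight. No gaps.
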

Theorem~\ref{theo:th2} demonstrates that extreme binarization functions cause the severe restriction of the information flow in the network. This implies that the significant decrease of $I(X; T)$ and $I(T; Y)$ mainly arises from the binarization functions in the network. Although this aligns with the optimization objective of the encoder that minimizes $I(X; T)$ as described in Eq.~\eqref{eq:ib}, it also leads to the significantly reduced upper bound of $I(T; Y)$ for the decoder. This means the extracted representation $T$ does not include enough relevant information about $Y$ for accurate inference, and the decoding computations of prediction-irrelevant features can be considered significantly redundant.

Overall, our analysis confirms our empirical results: the application of the binarization function in the encoder leads to a restriction in the information flow and a low correlation between the extracted features and the matting predictions; the decoder has significant computational redundancy to decode the low-correlation parts of the extracted features. We propose an accurate binarized network, BiVM, to solve the challenges above for efficient video matting.

\begin{figure*}[t]
  \centering
  \includegraphics[width=1.\textwidth]{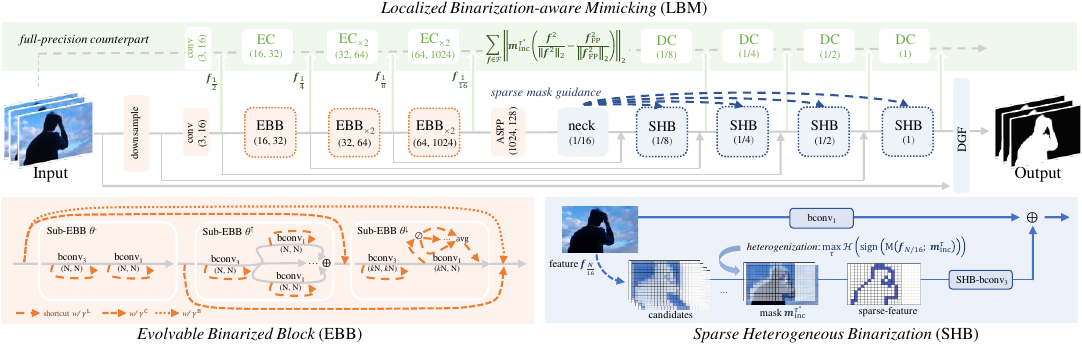}
  \caption{The overview of BiVM. We apply the Evolvable Binarized Block (EBB) for the encoder, the Sparse Heterogeneous Binarization (SHB) for the decoder, and the Localized Binarization-aware Mimicking (LBM) for the training process. Arrows $\rightarrow$ indicate feature flow, $\oplus$, $\oslash$, and $\operatorname{avg}$ indicate concatenation, splitting, and averaging features, respectively.}
  \label{fig:overview}
\end{figure*}

\subsection{BiVM: Evolvable Binarized Block}

\subsubsection{Binarization-Evoked Encoder Degradation}
\label{sec:Binarization-evokedEncoderDegradation}
The binarization baseline for video matting, as detailed in Section~\ref{lab:VideoMattingArchitecture}, employs a compact MobileNetV3 architecture as the encoder backbone with directly binarized convolutional and linear layers. But as demonstrated in Figure~\ref{fig:observation}, this direct binarization approach results in a degradation of over 4$\times$ in the MAD metric for the whole network. 

\begin{myTheo}
\label{theo:th3}
Let $X$ be an input variable and $\{f_k : k \in \mathbb{Z} \cap [1, N]\}$ be a set of functions. Define the representation $T_n$ as:
$T_n = \prod_{k=1}^n f_k(X)$
Then, for any $i, j \in \mathbb{Z} \cap [1, N]$ with $i < j$, we have:
$I(X; T_i) \geqslant I(X; T_j)$.
\end{myTheo}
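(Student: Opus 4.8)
The plan is to recognize the statement as a direct instance of the \emph{Data Processing Inequality} (DPI) applied to the layered structure of the network. First I would fix the reading of $T_n = \prod_{k=1}^n f_k(X)$ as successive composition of the layer maps: setting $T_0 = X$ and $T_k = f_k(T_{k-1})$, we get $T_n = (f_n \circ \cdots \circ f_1)(X)$, the representation at depth $n$. This is the natural interpretation in the present context, where each $f_k$ plays the role of a network layer and information propagates through the stack one layer at a time.

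The key observation is that for any $i < j$ the deeper representation is obtained from the shallower one by further processing: with $g = f_j \circ \cdots \circ f_{i+1}$ we have $T_j = g(T_i)$, so $T_j$ depends on $X$ only through $T_i$. Hence $T_j$ is conditionally independent of $X$ given $T_i$, and $X \to T_i \to T_j$ is a Markov chain. I would then invoke the DPI. A self-contained one-line derivation uses the chain rule of mutual information expanded in two orders,
\begin{equation}
I(X; T_i, T_j) = I(X; T_i) + I(X; T_j \mid T_i) = I(X; T_j) + I(X; T_i \mid T_j).
\end{equation}
The Markov property gives $I(X; T_j \mid T_i) = 0$, while $I(X; T_i \mid T_j) \geqslant 0$ by nonnegativity of conditional mutual information; comparing the two expressions yields $I(X; T_i) \geqslant I(X; T_j)$, which is exactly the claim.

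The main obstacle I anticipate is not the information-theoretic step, which is standard, but making the setup precise enough that every mutual information is well defined and the Markov structure is unambiguous. In particular I would pin down whether $X$ is discrete or continuous, whether the layer maps $f_k$ are deterministic (as written) or stochastic, and, most importantly, confirm the compositional semantics of $\prod_{k=1}^n f_k(X)$, since only under the composition reading does the crucial property ``$g$ reads $X$ only through $T_i$'' hold. Once these conventions are fixed, the conditional independence $X \perp T_j \mid T_i$ is immediate and the DPI applies without qualification.
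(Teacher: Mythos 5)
Your proposal is correct and follows essentially the same route as the paper's own proof: both establish the Markov chain $X \rightarrow T_i \rightarrow T_j$ from the compositional structure and then derive the data processing inequality by expanding $I(X; T_i, T_j)$ via the chain rule in two orders, using $I(X; T_j \mid T_i) = 0$ and the nonnegativity of $I(X; T_i \mid T_j)$. Your added care about reading $\prod_{k=1}^n f_k(X)$ as successive composition is a reasonable clarification of the paper's notation, but it does not change the argument.
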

Based on the analysis in Section~\ref{subsec:AnalysisfromtheInformationPerspective}, one of the main reasons for accuracy degradation is the extreme compression of $I(X; T)$ caused by the binarization function, which disrupts the information bottleneck trade-off among $X$, $T$, and $Y$. Thus, it is necessary to ensure that the obtained $T$ retains more prediction-relevant information to improve $I(T; Y)$. Introducing direct shortcuts in the binarized backbone network is an intuitive approach. Theorem~\ref{theo:th3} shows that the representation feedforwarded by the more direct transformation function can preserve more amount of the original information from input $X$ for accurate prediction by preventing $X$ from being subject to the extreme discretization effect of the binarized branch of the network. One of the most direct transformation functions is the shortcut for consistent input and output dimensions or the fixed pooling mappings for inconsistent dimensions.

The unsatisfactory performance of existing binarized video matting networks indicates that solely relying on a single short connection for each block is insufficient to improve the information flow and prevent accuracy collapse in binarized baselines. Some existing works attempt to construct per-convolution shortcuts~\cite{phan2020binarizing,liu2020reactnet}, but they struggle to adapt to channel variations in convolutions without significantly increasing computational burden. Additionally, their fixed feature mappings and strict convolutional co-location make it difficult to flexibly handle optimization challenges caused by extreme discretization, particularly the error accumulation across multiple binarized computational units.

Another reason for degradation is that in the binarized MobileNetV3 blocks, all convolutions are grouped or pointwise, with fewer parameters than regular convolutions, making them sensitive to binarization and hard to recover from collapse caused by mutual interference~\cite{phan2020binarizing,liu2020reactnet}. However, when considering potential alternatives, we find that existing binarized architectures face significant challenges to achieving the efficiency demonstrated by binarized MobileNetV3, making their direct application to a binarized matting network infeasible. For instance, some advanced binarized video matting architectures with a similar scale of parameters exhibit at least 3-8$\times$ higher FLOPs compared to binarized MobileNetV3~\cite{liu2020reactnet,phan2020binarizing,bethge2020meliusnet}.

Therefore, there is a pressing need for a novel binarized backbone that can effectively facilitate feature extraction, ensuring high-quality features for the decoder while maintaining an ultra-lightweight network architecture.

\subsubsection{Evolvable Binarized Block for Accurate Encoder}
To mitigate the performance degradation of the encoder caused by binarization while maintaining a lightweight structure, we propose an Evolvable Binarization Block (EBB) to construct a binarized video matting encoder (as in Figure~\ref{fig:overview}). The EBB consists of a computation structure robust to binarization and introduces multi-level evolvable shortcuts to improve the information flow in the binarized backbone.

Based on the analysis presented in Section~\ref{sec:Binarization-evokedEncoderDegradation}, the key to ensuring the accuracy of the binarized encoder is to improve its information flow to retain input-relevant information. Therefore, we define the following structural paradigm: a binarization-robust block structure composed of regular and pointwise convolutions, incorporating evolvable shortcuts at the layer-wise, cross-layer-wise, and block-wise levels. In terms of computational units, using computationally intensive regular convolutions instead of grouped convolutions can prevent the performance collapse caused by the mutual exclusion of grouped and pointwise convolutions after binarization. Regarding the topology, this paradigm establishes feature direct connections for binarized convolutions at various granular levels, achieving a reasonable balance between retaining prediction-relevant information and accurate information compression in the proposed evolvable strategy. Following this paradigm, the EBB reconstructs representations within binarized blocks, enhancing the backbone's ability to extract efficient features in a flexible dimensional space.

Specifically, the proposed EBB is mainly constructed by binarized convolutional layers without grouping and evolvable feature shortcuts. 
The sub-blocks of EBB can be expressed as follows, which can be seen as the basic units to construct the EBB backbone:
\begin{equation}
\begin{aligned}
\label{eq:sub-ebb}
\text{Sub-EBB}\ \theta(\boldsymbol{x}) :\ &\boldsymbol{o}=\operatorname{bconv}_1(\boldsymbol{x}')+\gamma^{\text{L}}_1 f(\boldsymbol{x}'), \\
&\boldsymbol{x}'=\operatorname{bconv}_3(\boldsymbol{x})+\gamma^{\text{L}}_2 f(\boldsymbol{x}), \\
&\text{s.t.}\ \operatorname{gcd}(c_{\boldsymbol{x}}, c_{\boldsymbol{o}}) = c_{\boldsymbol{x}} \text{ or } c_{\boldsymbol{o}}, \\
\end{aligned}
\end{equation}
where $\gamma^\textit{L}$ denotes the 0-dimension scaling factor for layer-wise shortcuts that is learnable during training, $\operatorname{gcd}(\cdot, \cdot)$ denotes the greatest common divisor function, and $f(\cdot)$ denotes the mapping function, which can be defined as follow according to the channel numbers of input and output: 
\begin{equation}
f(\boldsymbol{x})=\left\{
\begin{aligned}
\frac{c_{\boldsymbol{o}}}{c_{\boldsymbol{x}}}\sum_{i=1}^{c_{\boldsymbol{x}}/c_{\boldsymbol{o}}} & \boldsymbol{x}^{\left\{(i-1) c_{\boldsymbol{o}}, i\cdot c_{\boldsymbol{o}}-1\right\}}, & \text{ if } c_{\boldsymbol{x}} > c_{\boldsymbol{o}}, \\
 \prod^{c_{\boldsymbol{o}}/c_{\boldsymbol{x}}} & \boldsymbol{x}, & \text{ otherwise},
\end{aligned}
\right.
\end{equation}
where $\boldsymbol{x}^{\left\{m: n\right\}}$ denotes taking the $m$ to $n$ dimension of $\boldsymbol x$, $\sum$ denotes the summation of all divided features, and $\prod$ denotes the $\frac{c_{\boldsymbol{o}}}{c_{\boldsymbol{x}}}$-time repeated concatenation.
The constructed sub-block allows autonomous modification, \textit{i.e.}, channel expansion, retention, or reduction of the output, corresponding $\theta^\uparrow$, $\theta^-$, and $\theta^\downarrow$ variants of sub-blocks, respectively, enabling flexible computation of feature extraction in the high- and low-dimension space, which is crucial in building the adaptable binarized architecture. 
The number of input and output channels of the feature shortcut can be aligned with the corresponding binarized convolutional layer to preserve representations related to the prediction information. The transformation allows flexible changes of feature channels through parameter-free operations, such as channel splitting, averaging, and concatenation, to create shortcuts with minimal computation overhead.

Each EBB consists of three well-designed sub-blocks: the head, the middle, and the tail. The head is responsible for receiving low-dimensional input, extracting features at the initial lower input dimension, or increasing the feature dimension preliminarily. The middle sub-block aims to extract features in a high-dimensional space and further expand the output channels, playing a crucial role in extracting rich representational information. The tail sub-block compresses the feature dimensions to prevent the continuous rise in feature dimensions from causing computational overhead and compresses the representational information. The EBB can be represented as follows:
\begin{equation}
\begin{aligned}
\phi(\boldsymbol{x}): &\boldsymbol{o}=\theta^\downarrow(\theta^\uparrow (\boldsymbol{x}')+ \gamma^\text{C}_2 f(\boldsymbol{x}))+ \gamma^\text{C}_1 f(\boldsymbol{x}'),\\ 
&\boldsymbol{x}'=\theta^{-}(\boldsymbol{x})[c_{\boldsymbol{x}}=c_{\boldsymbol{o}}]+\theta^\uparrow \left(\boldsymbol{x}\right)\left[c_{\boldsymbol{x}}\neq c_{\boldsymbol{o}}\right],
\end{aligned}
\end{equation}
where $\gamma^\textit{C}$ denotes the scaling factor for cross-layer-wise shortcuts and $[\cdot]$ represents the \textit{Iverson bracket}~\cite{iverson1962programming}.
In the EBB, we introduce cross-layer shortcuts crossing the sub-blocks. This mitigates the impact of multiple division and averaging processes on the original representation in layer-wise shortcuts and reduces the effect of nominal error accumulation caused by multiple layers of binarization.
When applying EBB, we also construct the block-wise shortcuts as follows:
\begin{equation}
\label{eq:ebb}
\text{EBB: } \psi(\boldsymbol{x}; \gamma^\text{L}, \gamma^\text{C}, \gamma^\text{B}): \boldsymbol{o} = \phi(\boldsymbol{x}) + \gamma^\text{B} f(\boldsymbol{x}),
\end{equation}
where $\gamma^\text{B}$ denotes the scaling factor of block-wise shortcuts.

Based on the proposed structure, we further construct its evolution strategy during training. Motivated by retaining prediction-relevant information, the EBB introduces layer-wise, cross-layer-wise, and block-wise shortcuts to avoid the effects of extreme binarization. However, dense information direct connections imply neglecting the role of computational units in information extraction and compression, which, in extreme cases, can even lead to the degradation of the backbone. Therefore, during training, we constrain the weights of the shortcuts' associated scaling factors using the following regularization method:
\begin{equation}
\label{eq:l_ebb}
\min\mathcal{L}_\text{EBB} = \sum\limits_{i=1}^{B} \boldsymbol{\gamma}_i^\text{C} + \boldsymbol{\gamma}_i^\text{B}.
\end{equation}
At the beginning of training, we initialize the scaling factors associated with the per-layer shortcuts to 1 and the cross-layer and block-wise shortcuts to $r \times 10^{-3}$, where $r$ is a random number between 0 and 1.
This constraint aims to minimize the degradation of computational units caused by connections crossing multiple layers while flexibly optimizing the backbone's information flow through learning. And the EBB structures $\psi(\boldsymbol{x}; \gamma^{\text{L}^*}, \gamma^{\text{C}^*}, \gamma^{\text{B}^*})$ with optimized $\gamma^{\text{L}^*}$, $\gamma^{\text{C}^*}$, and $\gamma^{\text{B}^*}$ parameters are applied during inference.

\subsection{BiVM: Sparse Heterogeneous Binarization}

\subsubsection{Computational Decoder Redundancy}
\label{subsubsec:ComputationalDecoderRedundancy}
In Section~\ref{subsec:BottlenecksofBinarized}, we observed that although binarizing the decoder does not severely affect accuracy as much as binarizing the encoder, the decoder generates a substantial computational overhead (more than 71.8\% of the entire network) with a small number of parameter (less than 12.1\%). Therefore, the computational burden of the decoder becomes the efficiency bottleneck for the binarized video matting network. This issue mainly stems from the computational demands of processing high-resolution features in the decoder. For example, consider the output block of the decoder, which uses two 3$\times$3 convolutions to process features at the original scale. Since grouped convolutions, which are detrimental to binarization, are not used here, the accuracy degradation due to decoder binarization is less significant than in the encoder. However, the computational load of this single block in the decoder (the last of five decoder blocks) is equivalent to 103\% of the entire encoder in the binarized baseline.

It is worth noting that not all spatial computations are equally important. In video matting, most frames contain large, continuous foreground and background regions that can often be identified at lower resolutions. This indicates that the involved computations are sparse, with great room for compression. Moreover, this sparsity is feature-related, making it difficult to alleviate through structurally reducing or compressing computational units in the network. The current decoder architecture repeats these computations at multiple scales, leading to enormous computational overhead, especially for high-resolution feature maps. This inefficiency poses a challenge for constructing an effective decoder, even for the most extreme 1-bit binarization.

Therefore, there is a need to fundamentally redesign computational units to develop an efficient decoder that reduces redundant computations while maintaining accurate matting results. This redesign should consider binarization and aim to simplify the computational process without compromising output quality.

\subsubsection{Sparse Heterogeneous Binarization for Efficient Decoder}
To address the computational redundancy of the decoder, especially when processing high-resolution features, we propose the Sparse Heterogeneous Binarization (SHB) method in BiVM (Figure~\ref{fig:overview}). This method aims to mitigate the dense computation over continuous regions of features in the decoder by improving the information heterogeneity of the binarized features through sparsification.

Inspired by the sparse segmentation of trimap and the classification of spatial regions, our hypothesis suggests optimizing the computation allocation in the matting decoder. Computations should not be uniformly distributed across the entire image's foreground/background areas but rather focused on the unknown alpha matting regions. To approximate these unknown regions and maximize computational savings during inference, we use the smallest scale features to generate an incoherent region binary mask~\cite{transfiner} $\boldsymbol{m}_\text{inc}$, sparsifying and heterogenizing the features involved in computation through information-guided processes. The specific computation method for $\boldsymbol{m}_\text{inc}$ is as follows:
\begin{equation}
\label{eq:shb-mask}
\boldsymbol{m}_\text{inc}^{\tau}=\operatorname{bool}\left(\left|\boldsymbol{f}_{\frac{N}{16}} - \mathcal{S}_{\uparrow}\left(\mathcal{S}_{\downarrow}\left(\boldsymbol{f}_{\frac{N}{16}}\right)\right)\right| - \tau\right),
\end{equation}
\begin{equation}
\label{eq:bool}
\operatorname{bool}(\boldsymbol{x})=
\begin{cases}
1,& \mathrm{if} \ \boldsymbol{x} \ge 0,\\
0,& \mathrm{otherwise.}
\end{cases}
\end{equation}
This computation aims to utilize the low-resolution features $\boldsymbol{f}_{\frac{N}{16}}$ obtained from the first decoder block to create a mask $\boldsymbol{m}_\text{inc} \in \{0, 1\}^{\frac{N}{16} \times \frac{N}{16}}$. In Eq.~\eqref{eq:shb-mask}, $N$ represents the original input scale, $\operatorname{bool}(\cdot)$ is the boolean function, and $\mathcal{S}_{\uparrow}(\cdot)$ and $\mathcal{S}_{\downarrow}(\cdot)$ are 2$\times$ up-sampling and down-sampling functions, respectively.
Since the original and sampled features are floating-point numbers, a threshold $\tau$ is introduced to divide the features before and after sampling and then use the boolean function to determine the mask value.

To achieve a balance between representation information and computational burden, we obtained optimized $\tau^*$ by
\begin{equation}
\label{eq:ie}
\max\limits_{\tau}\mathcal{H}\left(\operatorname{sign}\left(\operatorname{M}\left(\boldsymbol{f}_{\frac{N}{16}}; \boldsymbol{m}_\text{inc}^{\tau}\right)\right)\right)=-\sum\limits_{b\in\{-1,1\}}p(b)\ln\left(p(b)\right),
\end{equation}
where $\operatorname{M}\left(\boldsymbol{f}_{\frac{N}{16}}; \boldsymbol{m}_\text{inc}^{\tau}\right)$ denotes the masking operation guided by $\boldsymbol{m}_\text{inc}^{\tau}$, selecting the elements of $\boldsymbol{f}_{\frac{N}{16}}$ at positions where $\boldsymbol{m}_\text{inc}^{\tau} = 1$, $b\in\{-1, +1\}$ is a random variable obeying Bernoulli distribution sampled from $\operatorname{sign}\left(\operatorname{M}\left(\boldsymbol{f}_{\frac{N}{16}}; \boldsymbol{m}_\text{inc}^{\tau}\right)\right)$, and $\mathcal{H}(\cdot)$ denotes the information entropy. This optimization aims to ensure the heterogeneity of features after binarization by maximizing information entropy while sparsifying the features.
The obtained optimized threshold $\tau^*$ determines the binary mask $\boldsymbol{m}_\text{inc}^{\tau^*}$ used in the proposed SHB.
The masked incoherent regions are primarily located at the boundaries of object instances or high-frequency areas (the blue grid of sparse features in Figure~\ref{fig:overview}). The points within these incoherent regions are considered uncertain in alpha estimation, and the information within them is highly heterogeneous, thus containing very little redundancy. Computation can then ignore or simplify the remaining regions outside these incoherent areas. Since the features in this process constitute the lowest resolution among the four sets input to the decoder, the computation can be efficient with few burdens.

Subsequently, we utilize the identified mask $\boldsymbol{m}_\text{inc}$ to convert convolutions in high-resolution blocks into sparse-assisted binarized convolutions, ensuring efficient computation. Our devised masked sparse convolution differs from~\cite{transfiner,vmt} in that it doesn't perform pointwise attention for the entire incoherent regions, resulting in a design more conducive to binarization~\cite{qin2023bibench} and computation efficiency. Specifically, we modify the 3$\times$3 convolution in the up-sample block of the decoder to:
\begin{equation}
\label{eq:SHB}
\text{SHB: }
\boldsymbol{o} = \operatorname{SH-bconv}_3(\boldsymbol{x}; \operatorname{bilinear}^k(\boldsymbol{m}_\text{inc}^{\tau^*}))+\operatorname{bconv}_1(\boldsymbol{x}),
\end{equation}
where $\operatorname{SH-bconv}_3 (\cdot ; \boldsymbol{m}_\text{inc}^{\tau^*})$ denotes the binarized 3$\times$3 convolution assisted by a sparse mask $\boldsymbol{m}_\text{inc}^{\tau^*}$, where both the weight and sparse activation undergo binarization. Additionally, $\operatorname{bilinear}^k$ represents a bilinear interpolation up-sampling operation, where the low-resolution sparse mask $\boldsymbol{m}_\text{inc}$ is up-sampled by a factor of $k$, where $k \in \{2, 4, 8, 16\}$. 
For the binarized convolution $\operatorname{SH-bconv}_3$, we implement sparse computation following~\cite{tian2023designing,transfiner}, skipping regions with $\boldsymbol{m}_\text{inc}(x, y) = 0$ during inference. Importantly, this approach preserves this operation's non-grouped 3$\times$3 convolution nature, ensuring its practical binarization-friendliness.
Furthermore, in Eq.~(\ref{eq:SHB}), in addition to the sparse-assisted binarized 3$\times$3 convolution, we incorporate a 1$\times$1 binarized convolution layer to process feature extraction across the entire feature set. The output is fused with $\operatorname{SH-bconv}$ to guide finer predictions for continuous regions.

Our SHB first predicts a sparse binary mask based on low-resolution features, assisting binarization in higher-resolution features of the decoder. As demonstrated in the experimental section (Tab.~\ref{tab:ablation}), SHB significantly decreases the computational FLOPs of the binarized video matting network by significant 52.6\% while maintaining video matting accuracy.

\subsection{BiVM: Localized Binarization-aware Mimicking}

\subsubsection{Binarization-Aware Training Pipeline}
\label{subsec:TrainingPipeline}
We introduce the training pipeline for our BiVM network, incorporating additional training steps and iterations to ensure the comprehensive convergence of the binarized video matting network. This stands in contrast to the training of the full-precision RVM counterpart~\cite{lin2022robust}. Our BiVM pipeline consists of two phases: pre-training and matting training.

We conducted 200 training epochs on the binarized EBB backbone during the pre-training phase using the ImageNet classification dataset. This aimed to acquire a well-pre-trained backbone, facilitating smoother convergence in the subsequent matting training phase. Direct binarization of fully pre-trained models like MobileNetV3 can lead to near crashes. Hence, we apply the complete pre-training phase to all compared binarized video matting networks.

Then, the training phase for matting is sensitive to binarization and is constructed in accordance with~\cite{lin2022robust}, organized into 4 stages. \textit{Stage 1} entails undergoing 20 epochs of training on the low-resolution VM dataset without DGF, using $T=15$ frames for rapid updates. The learning rate for the EBB backbone is fixed at 1e-4, while the others are set to 2e-4. Furthermore, the input resolution $h, w$ is randomly selected from 256-512px to enhance resilience. In \textit{stage 2}, the network undergoes training with $T=50$, featuring a halved learning rate and an additional 2 epochs to facilitate the acquisition of long-term dependencies. During \textit{stage 3}, the DGF module is introduced, and a single epoch is dedicated to training on both low-resolution lengthy and high-resolution short sequences from the VM dataset. The low-resolution pass employs $T=40$, with $h$ and $w$ remaining the same as in stage 1 without DGF. Meanwhile, the high-resolution pass utilizes DGF with a downsampling factor $s=0.25$, $\hat{T}=6$, and $\hat{h}, \hat{w} \sim(1024,2048)$. The learning rate for DGF is 2e-4, and for the rest, it is 1e-5. In \textit{stage 4}, the network undergoes a 5-epoch training on D646 and AIM, with an increased learning rate for the decoder to 5e-5.

As for the loss function during training of the binarized video matting network, we follow the settings in~\cite{lin2022robust,qin2024bimatting}. 
For the per-training phase, the loss function can be presented as:
\begin{equation}
\mathcal{L}_\text{pre-training}=\mathcal{L}_\text{CE}+\lambda_\text{EBB}\mathcal{L}_{\text {EBB}},
\end{equation}
where $\mathcal{L}_\text{CE}$ denotes the cross-entropy loss~\cite{de2005tutorial}, $\mathcal{L}_{\text {EBB}}$ is as in Eq.~\eqref{eq:l_ebb}, and $\lambda_\text{EBB}$ represents a hyperparameter and is set as 1e-4 by default in our BiVM.
For the video matting training phase, the total matting loss $\mathcal{L}^M$ on all $t \in[1, T]$ frames is
\begin{equation}
\label{eq:l_matting}
\mathcal{L}^\text{M}=\mathcal{L}_{\ell 1}^\alpha+\mathcal{L}_{\text {lap }}^\alpha+5 \mathcal{L}_\text{t c}^\alpha+\mathcal{L}_{\ell 1}^F+5 \mathcal{L}_\text{t c}^F,
\end{equation}
where $\mathcal{L}_{l 1}^\alpha$ and $\mathcal{L}_{\text {lap }}^\alpha$ respectively denote $\ell$1 and pyramid Laplacian losses to learn alpha $\alpha_t$ with respect to the ground-truth $\alpha_t^*$~\cite{forte2020f,hou2019context}, temporal coherence loss $\mathcal{L}_{t c}^\alpha$~\cite{sun2021deep},
$\mathcal{L}_{\ell 1}^F$ and $\mathcal{L}_{t c}^F$ respectively denote the $\ell$1 and temporal coherence losses on pixels to learn foreground $F_t$ with respect to the ground-truth $F_t^*$, $\alpha_t^*>0$ following~\cite{lin2021real}.
For semantic segmentation, we use the binary cross entropy loss to learn the segmentation probability $S_\text{t}$ for the ground-truth binary label $S_\text{t}^*$~\cite{lin2022robust}.

\subsubsection{Localized Binarization-aware Mimicking for Stable Convergence}
Despite the benefits of a carefully designed neural architecture, the binarized video matting network struggles to converge as stably as its full-precision counterpart in a standard training pipeline due to the representation and gradient errors caused by the binarization function. An intuitive improvement is to introduce additional supervision to enhance optimization. Given that the full-precision counterpart has the same architecture and a stronger representational capacity, its features can be considered an ideal reference for the binarized network. Therefore, we introduce a Localized Binarization-aware Mimicking (LBM) technique into the standard optimization pipeline (Figure~\ref{fig:overview}). By mimicking the critical parts of the full-precision counterpart's highly relevant features to predict information, LBM improves the representations of the binarized video matting network and enables it to converge stably.

Specifically, we use the binary mask obtained in Eq.~\eqref{eq:ie} to focus distillation on locations on the information-dense incoherent regions so that the corresponding features of these regions are aligned with the full-precision counterpart for accurate prediction. The loss function can be expressed as
\begin{equation}
\mathcal{L}_\text{LBM}=\sum\limits_{\boldsymbol{f} \in \mathcal{F}}\left\|\boldsymbol{m}_\text{inc}^{\tau^*}\left(\frac{\boldsymbol{f}^2}{\left\|\boldsymbol{f}^2\right\|_2}-\frac{\boldsymbol{f}^2_\text{FP}}{\left\|\boldsymbol{f}^2_\text{FP}\right\|_2}\right)\right\|_2,
\end{equation}
where $\mathcal{F}=\left\{\boldsymbol{f}_{\frac{1}{2}}, \boldsymbol{f}_{\frac{1}{4}}, \boldsymbol{f}_{\frac{1}{8}}, \boldsymbol{f}_{\frac{1}{16}}\right\}$ denotes the set of different scale of features, $\mathcal{F}$ denote the feature sets from the encoder, and $\|\cdot\|_2$ denotes the $\ell$2 normalization. The total losses of matting training are
\begin{equation}
\mathcal{L}_\text{matting}=\mathcal{L}^\text{M}+\lambda_\text{LBM}\mathcal{L}_{\text{LBM}},
\end{equation}
where $\mathcal{L}^\text{M}$ is as in Eq.~\eqref{eq:l_matting} and $\lambda_\text{LBM}$ is set as 1e-4 by default.

The proposed LBM technique is performed in the first stage of the matting training process to make the initial learning process of the binarized video matting network, which focuses on critical features and converges stably.

\begin{table*}[t]
    \small
    \centering
    \setlength{\tabcolsep}{1.75mm}{
    \begin{tabular}{llllrrrrrrrrr}
        \toprule
        Model & Encoder & Decoder & Optim. & \#Bit & \#FLOPs\tiny{(G)} & \#Param\tiny{(MB)} & Latency\tiny{(ms)} & MAD & MSE & Grad & Conn & dtSSD \\ \midrule
        RVM & Vanilla\tiny{(MBV3)} & Vanilla & Vanilla & 32 & 4.57 & 14.5 & 921.42 & 6.08 & 1.47 & 0.88 & 0.41 & 1.36 \\
         & Vanilla & Vanilla & Vanilla & 1 & 0.55 & {0.64} & 60.65 & 28.49 & 18.16 & 6.80 & 3.74 & 3.64 \\ 
         & Vanilla & Vanilla & LBM & 1 & 0.55 & 0.64 & 60.65 & 24.65 & 15.79 & 4.68 & 3.16 & 2.56 \\\cmidrule{2-13}
         
         & EBB & Vanilla & Vanilla & 32 & 7.11 & 15.9 & 958.86 & 5.91 & 1.09 & 0.94 & 0.38 & 1.39  \\
         & EBB & Vanilla & Vanilla & 1 & 0.57 & 0.67 & 67.92 & 13.46 & 6.91 & 3.18 & 1.52 & 2.74 \\ \cmidrule{2-13}
         
         & Vanilla & SHB & Vanilla & 32 & 2.58 & 14.6 & 707.68 & 9.69 & 4.38 & 2.05 & 0.96 & 1.69 \\ 
         & Vanilla & SHB & Vanilla & 1 & 0.27 & 0.67 & 43.54 & 191.89 & 187.07 & 15.15 & 27.80 & 3.67  \\ \midrule
         
        BiVM & EBB & SHB & LBM & 32 & 5.02 & 16.3 & 745.11 &  7.43 & 2.48 & 1.45 & 0.62 & 1.57 \\
         & EBB & SHB & LBM  & 1 & 0.32 & 0.67 & 63.81 & 11.82 & 5.75 & 2.91 & 1.28 & 2.63 \\ \bottomrule
    \end{tabular}}
    \caption{Ablation result of BiVM on VM~\cite{lin2021real} dataset.}
    \label{tab:ablation}
\end{table*}

\begin{table*}[!t]
    \small
    \centering
    \setlength{\tabcolsep}{2.mm}{
    \begin{tabular}{llrrrrrrrrrr}
    \toprule
        ~ & ~ & ~ & ~ & ~ & ~ & ~ & Alpha & ~ & ~ & FG \\ 
        \cmidrule(lr){7-11}\cmidrule(lr){12-12}
        Dataset & Method & \#Bit & \#FLOPs\tiny{(G)} & \#Param\tiny{(MB)} & Latency\tiny{(ms)} & MAD  & MSE  & Grad  & Conn  & dtSSD  & MSE \\ \midrule
        VM & DeepLabV3~\cite{chen2017rethinking} & 32 & 136.06 & 223.66 & - & 14.47 & 9.67 & 8.55 & 1.69 & 5.18 & - \\ 
        \tiny{512$\times$288} & BGMv2~\cite{lin2021real} & 32 & 8.46 & 19.4 & - & 25.19 & 19.63 & 2.28 & 3.26 & 2.74 & - \\ 
        ~ & \textcolor{black}{MODNet~\cite{ke2022modnet}} & \textcolor{black}{32} & \textcolor{black}{8.80} & \textcolor{black}{25.0} & \textcolor{black}{-} & \textcolor{black}{9.41} & \textcolor{black}{4.30} & \textcolor{black}{1.89} & \textcolor{black}{0.81} & \textcolor{black}{2.23} & \textcolor{black}{-} \\ 
        ~ & RVM~\cite{lin2022robust} & 32 & 4.57 & 14.5 & 921.42 & 6.08 & 1.47 & 0.88 & 0.41 & 1.36 & - \\ \cmidrule{2-12}
        ~ & RVM-BNN$^\dagger$~\cite{hubara2016binarized} & 1 & 0.50 & 0.57 & 80.70 & 189.13 & 184.33 & 15.01 & 27.39 & 3.65 & - \\ 
        ~ & RVM-DoReFa~\cite{dorefa} & 1 & 0.52 & 0.64 & 97.66 & 51.64 &  34.50 &   8.85 &   7.14 &   4.09 & - \\ 
        ~ & RVM-ReCU$^\dagger$~\cite{xu2021recu} & 1 & 0.52 & 0.64 & 99.03 & 189.13 & 184.33 & 15.01 & 27.39 & 3.65 & - \\ 
        ~ & RVM-ReAct~\cite{liu2020reactnet} & 1 & 0.55 & 0.64 & 99.35 & 28.49 & 18.16 & 6.80 &   3.74 & 3.64 & - \\ 
        ~ & BiMatting~\cite{qin2024bimatting} & 1 & 0.37 & 0.67 & 65.76 & 12.82 & 6.65 & 2.97 & 1.42 & 2.69 & - \\ 
        ~ & BiVM (ours) & {1} & \textbf{0.32} & {0.67} & 63.81 & \textbf{11.82} & \textbf{5.75} & \textbf{2.91} & \textbf{1.28} & \textbf{2.63} & - \\ \midrule
        D646 & DeepLabV3~\cite{chen2017rethinking} & 32 & 241.89 & 223.66 & - & 24.50 & 20.1 & 20.30 & 6.41 & 4.51 & - \\ 
        \tiny{512$\times$512} & BGMv2~\cite{lin2021real} & 32 & 16.48 & 19.4 & - & 43.62 & 38.84 & 5.41 & 11.32 & 3.08 & 2.60 \\ 
        ~ & \textcolor{black}{MODNet~\cite{ke2022modnet}} & \textcolor{black}{32} & \textcolor{black}{15.64} & \textcolor{black}{25.0} & \textcolor{black}{-} & \textcolor{black}{10.62} & \textcolor{black}{5.71} & \textcolor{black}{3.35} & \textcolor{black}{2.45} & \textcolor{black}{1.57} & \textcolor{black}{6.31} \\ 
        ~ & RVM~\cite{lin2022robust} & 32 & 8.12 & 14.5 & 1313.55 & 7.28 & 3.01 & 2.81 & 1.83 & 1.01 & 2.93 \\ \cmidrule{2-12}
        ~ & RVM-BNN$^\dagger$~\cite{hubara2016binarized} & 1 & 0.88 & 0.57 & 149.81 & 281.20 & 276.85 &  25.26 &  73.59 &   1.08 &   6.95 \\ 
        ~ & RVM-DoReFa~\cite{dorefa} & 1 & 0.92 & 0.64 & 160.24 & 133.63 & 116.69 &  17.09 &  35.08 &   2.58 &   6.97  \\ 
        ~ & RVM-ReCU$^\dagger$~\cite{xu2021recu} & 1 & 0.92 & 0.64 & 159.35 & 281.20 & 276.85 &  25.26 &  73.59 &   1.08 &   6.95 \\ 
        ~ & RVM-ReAct~\cite{liu2020reactnet} & 1 & 0.97 & 0.64 & 169.75 & 56.41 &  43.10 &  14.05 &  14.85 &   2.56 &   6.85 \\ 
        ~ & BiMatting~\cite{qin2024bimatting} & 1 & 0.66 & {0.67} & 112.29 & {32.74} &  {24.48} &   {9.34} &   {8.62} &   {2.21} &   {5.86} \\ 
        ~ & BiVM (ours) & {1} & \textbf{0.57} & {0.67} & \textbf{102.98} & \textbf{31.12} & \textbf{23.68} & \textbf{8.32} & \textbf{8.13} & {2.15} & \textbf{5.26} \\\midrule
        AIM & DeepLabV3~\cite{chen2017rethinking} & 32 & 241.89 & 223.66 & - & 29.64 & 23.78 & 20.17 & 7.71 & 4.32 & - \\ 
        \tiny{512$\times$512} & BGMv2~\cite{lin2021real} & 32 & 16.48 & 19.4 & - & 44.61 & 39.08 & 5.54 & 11.60 & 2.69 & 3.31 \\ 
        ~ & \textcolor{black}{MODNet~\cite{ke2022modnet}} & \textcolor{black}{32} & \textcolor{black}{15.64} & \textcolor{black}{25.0} & \textcolor{black}{-} & \textcolor{black}{21.66} & \textcolor{black}{14.27} & \textcolor{black}{5.37} & \textcolor{black}{5.23} & \textcolor{black}{1.76} & \textcolor{black}{9.51} \\ 
        ~ & RVM~\cite{lin2022robust} & 32 & 8.12 & 14.5 & 1313.55 & 14.84 & 8.93 & 4.35 & 3.83 & 1.01 & 5.01 \\\cmidrule{2-12}
        ~ & RVM-BNN$^\dagger$~\cite{hubara2016binarized} & 1 & 0.88 &  0.57 & 149.81 & 327.02 & 321.15 &  23.80 &  85.55 &   0.75 &   7.84 \\  
        ~ & RVM-DoReFa~\cite{dorefa} & 1 & 0.92 & 0.64 & 160.24 & 129.29 & 107.79 &  17.31 &  34.18 &   2.62 &   7.85  \\ 
        ~ & RVM-ReCU$^\dagger$~\cite{xu2021recu} & 1 & 0.92 & 0.64 & 159.35 & 327.02 & 321.15 &  23.80 &  85.55 &   0.75 &   7.84 \\ 
        ~ & RVM-ReAct~\cite{liu2020reactnet} & 1 & 0.97 & 0.64 & 169.75 & 59.90 &  44.08 &  14.32 &  15.90 &   2.37 &   8.00 \\ 
        ~ & BiMatting~\cite{qin2024bimatting} & 1 & {0.66} & {0.67} & 112.29 & {35.17} &  {26.53} &   {9.42} &   {9.24} & {1.82} & {7.00} \\ 
        ~ & BiVM (ours) & {1} & \textbf{0.57} & {0.67} & \textbf{102.98} & \textbf{31.29} & \textbf{22.38} & \textbf{9.62} & \textbf{8.24} & 2.06 & \textbf{6.55} \\
        \bottomrule
    \end{tabular}}
    \caption{Low-resolution comparison on VM, D646, and AIM datasets. \textbf{Bold} indicates the best performance among 1-bit networks, and $^\dagger$ indicates the results are crashed.
    The 32-bit networks follow the released models or results directly~\cite{lin2022robust,BGMv2_github}.}
    \label{tab:main_result_lr}
\end{table*}

\begin{table*}[ht]
    \small
    \centering
    \setlength{\tabcolsep}{3.3mm}{
    \begin{tabular}{llrrrrrrrr}
    \toprule
        Dataset & Method & \#Bit & \#FLOPs\tiny{(G)} & \#Param\tiny{(MB)} & Latency\tiny{(ms)} & SAD  & MSE  & Grad & dtSSD \\ \midrule
        VM & BGMv2~\cite{lin2021real} & 32 & 9.86 & 19.4 & - & 49.83 &  44.71 &  74.71 &  4.09 \\ 
        \tiny{1920$\times$1080} & \textcolor{black}{MODNet + FGF~\cite{ke2022modnet}} & \textcolor{black}{32} & \textcolor{black}{7.78} & \textcolor{black}{25.0} & \textcolor{black}{-} & \textcolor{black}{11.13} & \textcolor{black}{5.54} & \textcolor{black}{15.30} & \textcolor{black}{3.08} \\
        ~ & RVM~\cite{lin2022robust} & 32 & 4.15 & 14.5 & 2497.79 & 6.57 & 1.93 & 10.55 & 1.90 \\ \cmidrule{2-10}
          & RVM-ReAct~\cite{liu2020reactnet} & 1 & 0.53 &  0.64 & 713.02 & 31.60 &  20.29 &  34.28  &  4.08  \\
        ~ & BiMatting~\cite{qin2024bimatting} & 1 & 0.38 & {0.67} & 556.24 & {18.16} & {11.15} & {21.90} & {3.25} \\ 
        ~ & BiVM (ours) & 1 & \textbf{0.33} & {0.67} & \textbf{481.31} & \textbf{14.54} & \textbf{8.19} & \textbf{21.66}  & \textbf{3.17}  \\ \midrule
        D646 & BGMv2~\cite{lin2021real} & 32 & 9.86 & 19.4 & - & 57.40 & 52.00 & 149.20 &   2.56 \\ 
        \tiny{1920$\times$1080} & \textcolor{black}{MODNet + FGF~\cite{ke2022modnet}} & \textcolor{black}{32} & \textcolor{black}{7.78} & \textcolor{black}{25.0} & \textcolor{black}{-} & \textcolor{black}{11.27} & \textcolor{black}{6.13} & \textcolor{black}{30.78} & \textcolor{black}{2.19} \\
        ~ & RVM~\cite{lin2022robust} & 32& 4.15 & 14.5 & 2497.79 & 8.67 & 4.28 & 30.06 & 1.64 \\ \cmidrule{2-10} 
          & RVM-ReAct~\cite{liu2020reactnet} & 1 & 0.53 &  0.64 & 713.02 & 57.38 & 42.14 &  71.24 &   3.03 \\
        ~ & BiMatting~\cite{qin2024bimatting} & 1 & {0.38} & {0.67} & 556.24 & {52.85} &  {44.08} &  {61.60} & {3.12} \\ 
        ~ & BiVM (ours) & 1 & \textbf{0.33} & {0.67} & \textbf{481.31} & \textbf{37.47} & \textbf{29.89} & \textbf{59.29} & \textbf{3.09}  \\ \midrule
        AIM & BGMv2~\cite{lin2021real} & 32 & 15.19 & 19.4 & - & 45.76 &  38.75 & 124.06 &   2.02 \\ 
        \tiny{2048$\times$2048} & \textcolor{black}{MODNet + FGF~\cite{ke2022modnet}} & \textcolor{black}{32} & \textcolor{black}{17.70} & \textcolor{black}{25.0} & \textcolor{black}{-} & \textcolor{black}{17.29} & \textcolor{black}{10.10} & \textcolor{black}{35.52} & \textcolor{black}{2.60} \\
        ~ & RVM~\cite{lin2022robust} & 32 & 8.37 & 14.5 & 4813.71 & 14.89 & 9.01 & 34.97 & 1.71 \\ \cmidrule{2-10} 
          & RVM-ReAct~\cite{liu2020reactnet} & 1 & 1.07 &  0.64 & 1798.14 & 57.38 &  42.14 &  71.24 & 3.03 \\
        ~ & BiMatting~\cite{qin2024bimatting} & 1 & 0.77 & {0.67} & 978.94 & 48.27 &  38.37 &  61.72 & 2.80 \\ 
        ~ & BiVM (ours) & 1 & \textbf{0.67} & {0.67} & \textbf{913.20} & \textbf{47.79} & \textbf{38.89} & \textbf{60.37} & \textbf{2.80} \\ \bottomrule
    \end{tabular}}
    \caption{High-resolution comparison on VM, D646, and AIM datasets.
    }
    \label{tab:main_result_hr}
\end{table*}

\section{Experiments}
We comprehensively evaluated the proposed BiVM across various benchmarks to present the performance.
\textbf{1)} We construct thorough ablation studies on the VM~\citep{lin2021real} dataset for the techniques in our proposed BiVM algorithm, \textit{i.e.}, to show their effectiveness on accuracy and efficiency.
\textbf{2)} We compare the proposed BiVM with 1-bit video matting networks employing advanced binarization algorithms~\cite{hubara2016binarized,dorefa,liu2020reactnet,xu2021recu} on VM~\citep{lin2021real}, D646~\cite{qiao2020attention}, and AIM~\cite{xu2017deep} datasets, showing BiVM can achieve the state-of-the-art (SOTA) accuracy results under binarization across different datasets. 
We excluded the use of Image Backgrounds used in RVM~\cite{lin2022robust} during training since it is inaccessible, and thus used only Video Backgrounds.
Notably, BiVM can even outperform certain existing full-precision video matting networks with up to hundreds of times parameters and FLOPs reductions. Remarkably, BiVM significantly diminishes computational FLOPs and model size by $14.3\times$ and $21.6\times$, respectively, showcasing its efficiency.
For the evaluation metrics, we apply Mean Absolute Difference (MAD), Mean Squared Error (MSE), spatial Gradient (Grad), and Connectivity (Conn) for evaluating $\alpha$ to measure quality, and the temporal coherence is evaluated using the dtSSD metric. We measure pixels where $\alpha>0$ via MSE for $F$~\cite{lin2022robust}.
\textbf{3)} We evaluate BiVM on real ARM CPU hardware, including mobile and edge devices, revealing the efficiency in deployment scenarios.
{The latency metrics in Tables~\ref{tab:ablation}, \ref{tab:main_result_lr}, and \ref{tab:main_result_hr} are based on the HUAWEI P40 Pro with a Kirin 990 processor, while Table~\ref{tab:hardware} further includes efficiency evaluation metrics based on the OnePlus 7T with a Snapdragon 855+ processor.}
\textbf{4)} We also present diverse visualization comparisons to demonstrate the effects of the proposed BiVM in practical applications.
We maintain a batch size of 4 throughout our experiments, distributed across 4 NVIDIA A800 GPUs for all training stages of BiVM.

\begin{figure*}[!t]
  \centering
  \includegraphics[width=\textwidth]{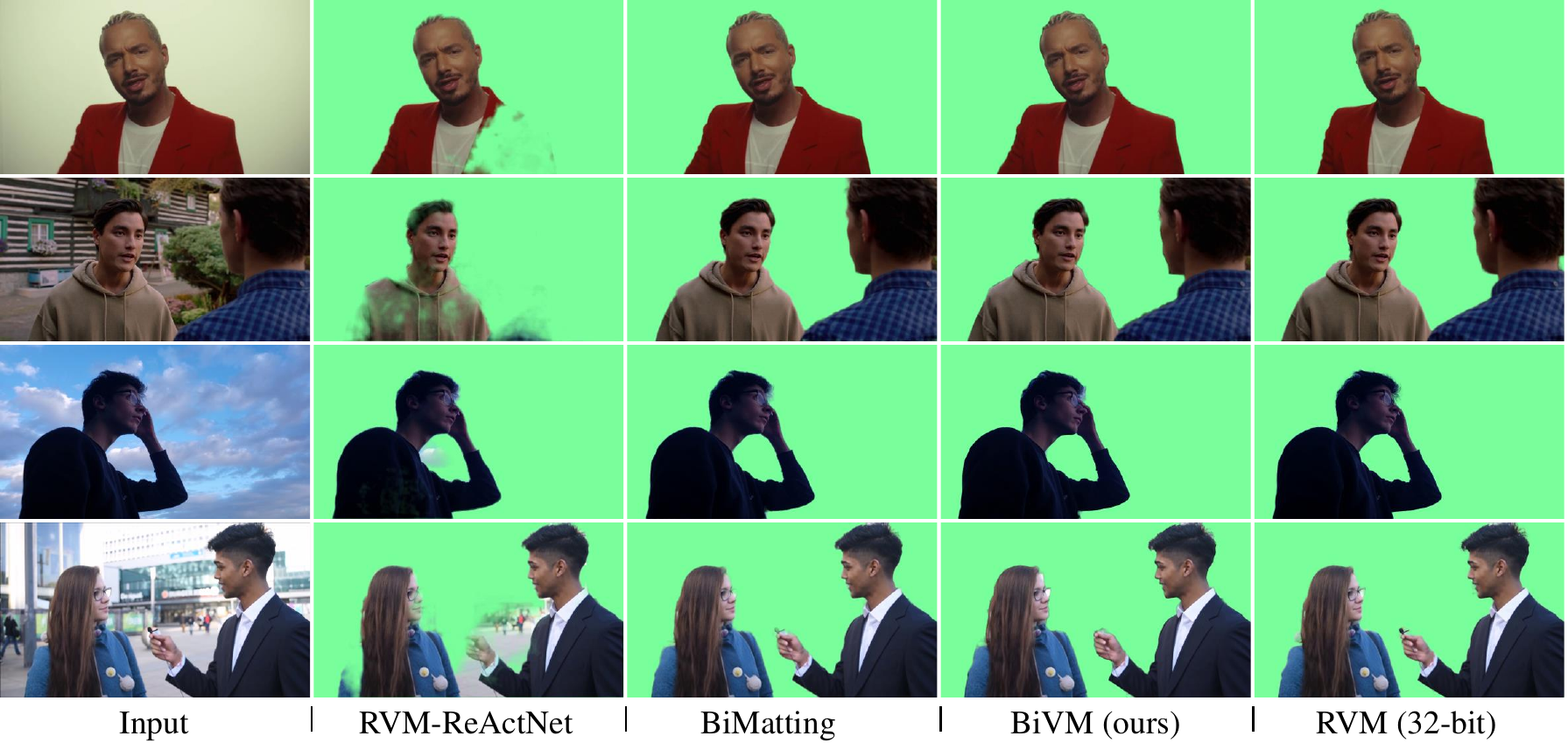}
  \caption{Frame matting comparison among 1-bit RVM-ReActNet, 1-bit BiMatting, 1-bit BiVM (ours), and 32-bit RVM models.}
  \label{fig:visual_overview}
\end{figure*}

\subsection{Quantitative Results}
\subsubsection{Ablation Study}
In Table~\ref{tab:ablation}, we ablate the techniques of BiVM on VM datasets in accuracy and efficiency.
We first compare all binarized video matting networks. For the binarized baseline, the accuracy degradation is evident across all metrics related to VM data recall. And despite a 22.7$\times$ parameter compression achieved by the baseline on efficiency, the binarized network only manages an 8.3$\times$ computational savings in efficiency. 
Then, we apply techniques in our BiVM to the binarized video matting network.
Substituting the encoder solely with our EBB noticeably restores the accuracy of the binarized video matting network, confirming the encoder as the primary limitation of accuracy in the baseline. 
LBM can also improve the accuracy of the binarized video matting network across various metrics as an optimization technique.
Even though the efficiency of the SHB is significantly improved when applied separately in the decoder, the accuracy of the binarized video matting network is still dangerously close to collapse. In contrast, a substantial accuracy improvement is achieved when the other two techniques are applied. 
The significant boost in accuracy performance in BiVM when combining all three enhancements emphasizes that the decoder should focus on fewer but essential representations to elevate accuracy by delivering high-quality features.

{Then, we compared the full-precision and binarized networks across different architectures under the same optimization pipeline. We found that, despite their differing architectures, all full-precision models exhibited a similar accuracy level, but their binarized counterparts showed significant variation. Among them, the binarized model using SHB alone suffered the largest drops in accuracy after binarization, up to 182.20 MAD. By contrast, applying only EBB reduces the accuracy degradation from binarization to just 7.55. This confirms the motivation we presented in Section~\ref{subsec:BottlenecksofBinarized}: the encoder of the original RVM is the main accuracy bottleneck in binarization. With the same LBM optimization, combining EBB and SHB further reduces binarization degradation to only 4.39 MAD (BiVM 11.82 \textit{vs.} full-precision 7.43), pushing the accuracy limits of binarized video matting networks.}

\begin{figure*}[!t]
  \centering
  \vspace{-0.05in}
  \includegraphics[width=\textwidth]{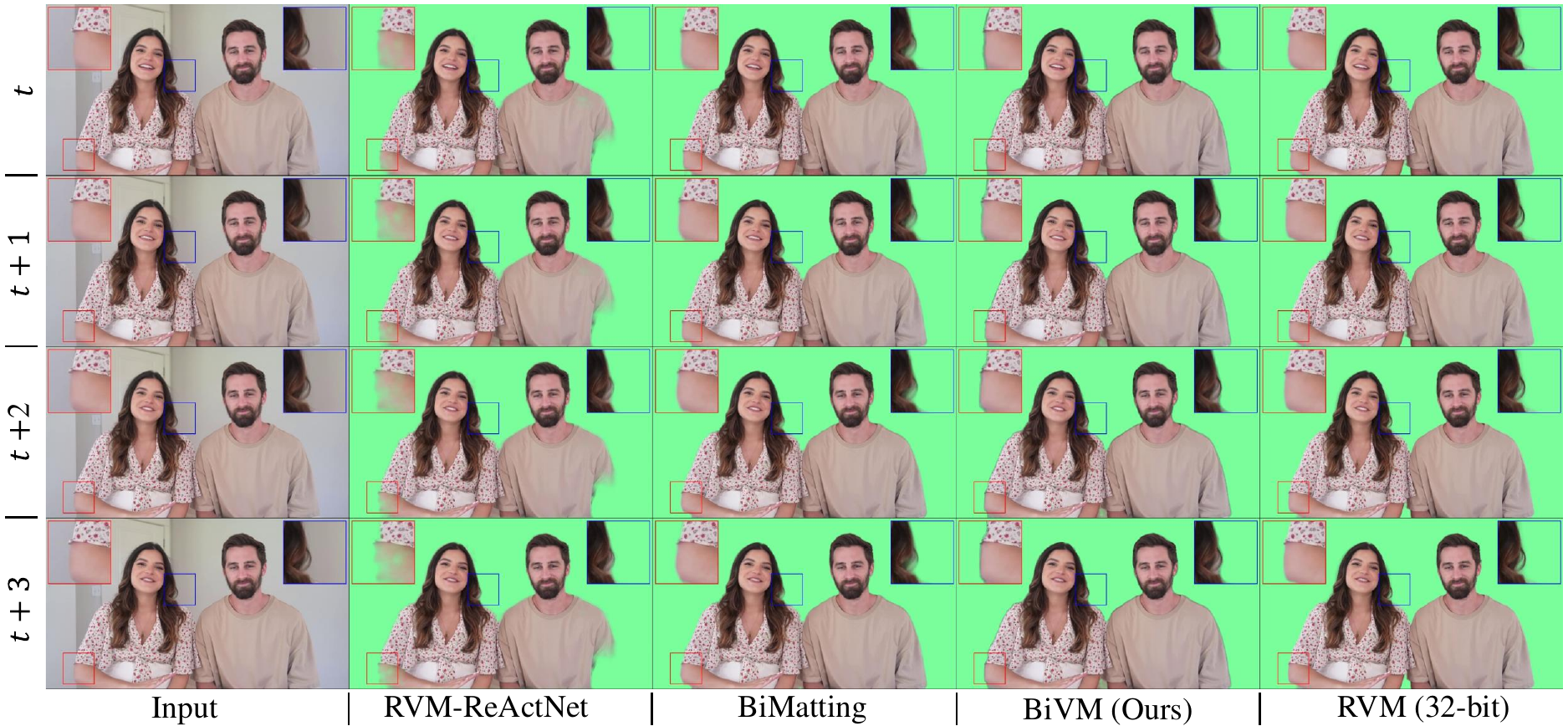}
  \vspace{-0.25in}
  \caption{Femporal coherence comparison among 1-bit RVM-ReActNet, 1-bit BiMatting, 1-bit BiVM (ours), and 32-bit RVM models.}
  \label{fig:visual_temporal}
\end{figure*}

\subsubsection{Main Comparison}
We use test data from the VM, D646, and AIM benchmarks to establish a comprehensive and fair comparison, integrating video backgrounds and image backgrounds following the configurations outlined in~\cite{lin2022robust,rvm_github}. Each testing clip contains 100 frames, wherein motion augmentation is applied to image samples.
Our evaluation compares our proposed BiVM network against the video matting networks binarized by existing binarization algorithms. The binarization algorithms includes classical BNN~\cite{phan2020binarizing} and DoReFa~\cite{dorefa}, as well as state-of-the-art (SOTA) ReActNet~\cite{liu2020reactnet} and ReCU~\cite{xu2021recu}, with the latter two regarded as exemplary practices for generic binarization~\cite{qin2023bibench}. 
\textcolor{black}{Results from some full-precision video matting methods are included for reference, such as RVM~\cite{lin2022robust} with MobileNetV3~\cite{howard2019searching} backbone, DeepLabV3~\cite{chen2017rethinking}, BGMv2~\cite{lin2021real} with MobileNetV2~\cite{sandler2018mobilenetv2} backbone, and MODNet~\cite{ke2022modnet} (+Fast Guided Filter (FGF)~\cite{lin2022robust}).}
For the fair comparison, we subject all binarized networks to the identical training pipeline as BiVM, as elaborated in Section~\ref{subsec:TrainingPipeline}. Regarding the full-precision video matting networks, we adhere to the results documented in prior studies~\cite{lin2022robust} unless otherwise specified.

Table~\ref{tab:main_result_lr} exhibits the comparison results of different binarized video matting networks on low-resolution input. The results signify that directly applying BNN and ReCU to the full-precision RVM yields a crash in accuracy for the binarized versions. This unexpected outcome is noteworthy, especially considering the latter method is classified among the leading binarization methods, underscoring the intricate nature of binarizing established video matting networks. In contrast, our BiVM model consistently outperforms all prevailing binarization models across diverse datasets. For example, compared with the network binarized by the ReActNet method, BiVM surpasses 16.67 on the MAD and 12.41 on the MSE metrics for alpha prediction. Note that the proposed BiVM also outperforms previous BiMatting~\cite{qin2024bimatting} in accuracy with greatly reduced FLOPs, showing the significant advantages of architecture and optimization designs. A detailed analysis is provided in Section~\ref{subsec:Visualization} with extensive visualizations. Moreover, BiVM surpasses even some 32-bit full-precision models when constrained to a mere 1-bit limit bit width. For example, BiVM outshines BGMv2 on VM, D646, and AIM datasets, and the latter even has hundreds of times the parameters and FLOPs compared with the binarized BiVM. It also outperforms DeepLabV3 on the VM dataset. These findings underscore the substantial potential of binarization in enhancing the efficiency of video matting networks. Table~\ref{tab:main_result_hr} compares our BiVM approach and other methodologies using high-resolution datasets. Our method consistently outperforms existing binarized video matting networks and even 32-bit BGMv2 across various datasets and metrics, affirming the robustness of improvements achieved by BiVM under diverse resolutions.

{Furthermore, BiVM exhibits notable efficiency alongside accurate prediction, showcasing its considerable potential in video matting. As illustrated in Table~\ref{tab:main_result_lr} and Table~\ref{tab:main_result_hr}, when compared to its full-precision counterpart, BiVM realizes a reduction in computational FLOPs by 14.3$\times$ and a decrease in parameters by 21.6$\times$, resulting in an inference latency low to just 7.8\% of the former on real-world hardware with 512$\times$512 input size (32-bit RVM 1313.55 ms \textit{vs.} 1-bit BiVM 102.98 ms). Note that BiVM also achieves an impressive 425$\times$ FLOPs saving against the full-precision DeepLabV3 model while achieving better accuracy performance. These advancements position BiVM as a highly prospective solution for deployment in edge computing, demonstrating the potential of the practical application in resource-constrained hardware.}

\subsection{Visualization Results}
\label{subsec:Visualization}

\begin{figure}[t]
\centering
\includegraphics[width=1.\textwidth]{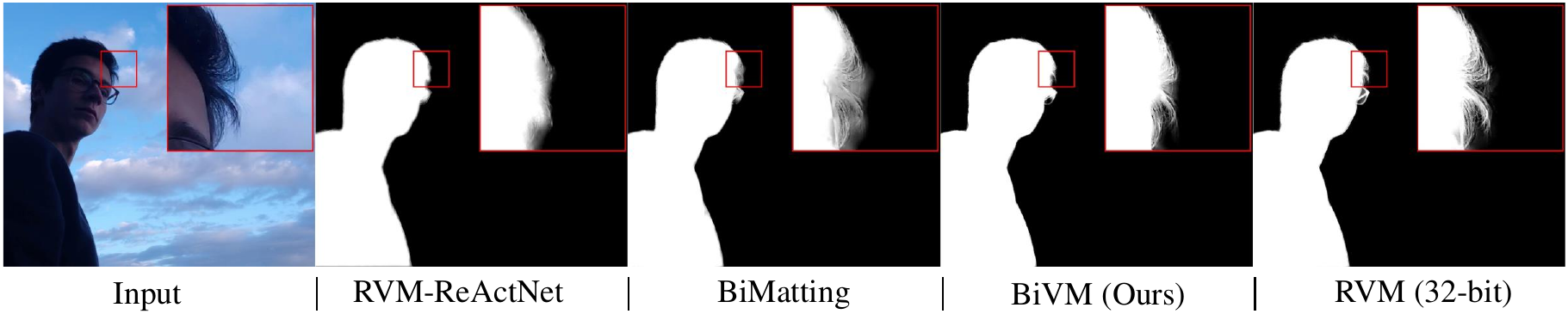}
\caption{Alpha detail comparison.}
\label{fig:alpha_details}
\end{figure}

Figure~\ref{fig:visual_overview} and Figure~\ref{fig:visual_temporal} showcase qualitative comparisons of natural videos. Figure~\ref{fig:visual_overview} contrasts RVM-ReActNet, BiMatting (the current state-of-the-art binarized video matting network), and the full-precision RVM with our BiVM model. The latter can be considered as the 32-bit counterpart of BiVM. In general, the resource consumption of BiVM is significantly reduced while maintaining performance close to the full-precision counterpart. Specifically, Figure~\ref{fig:visual_overview} presents experiments on videos from diverse frames, revealing the robustness of BiVM against semantic errors. Moreover, BiVM excels in matting edge regions compared to the other two models. Figure~\ref{fig:visual_temporal} focuses on temporal coherence. Our BiVM consistently outperforms in producing superior results, whereas RVM-ReActNet and BiMatting cause errors in several areas. Figure~\ref{fig:alpha_details} presents an alpha prediction comparison among various methods. Our method surpasses the SOTA binarized video matting network by accurately predicting intricate details, such as individual hair strands.

\begin{table}[t]
    \small
    \centering
    \setlength{\tabcolsep}{2.5mm}
    {
    \begin{tabular}{lrlr}
        \toprule
        Model & \#Bit & Device (CPU) & Latency\tiny{(ms)} \\ \midrule
        Full-Precision & 32 & {OnePlus 7T} & 2232.79 \\ 
        BiVM (ours) & 1 & (Snapdragon 855+) & 162.45 \\ 
        \midrule
        Full-Precision & 32 & {HUAWEI P40Pro} & 1313.55 \\ 
        BiVM (ours) & 1  & (Kirin 990) & 102.98 \\ 
        \bottomrule
    \end{tabular}}
    \caption{Deployment evaluation with 512$\times$512 input size.}
    \label{tab:hardware}
\end{table}

\subsection{Deployment Results}
To further confirm the effectiveness of BiVM in real-world deployment scenarios, we conducted evaluations on various mobile phones equipped with ARM CPUs, including HUAWEI P40 Pro with Kirin 990 and OnePlus 7T with Snapdragon 855+, measuring the actual processing speed in practice. We adopted the Larq~\cite{geiger2020larq} deployment library supporting binarized operators to ensure compatibility with BiVM. In Table~\ref{tab:hardware}, we compare BiVM and its full-precision counterpart, showing BiVM achieves 13.2$\times$ faster inference.

\subsection{{Potential Limitation}}
{Though BiVM achieves significant improvements in accuracy and efficiency over existing binarized methods, it still exhibits a slight performance gap compared to its full-precision counterpart, the 32-bit RVM, particularly in capturing highly intricate local details such as fine hair strands and translucent edges. The reason could be that the highly lightweight architecture may cause the model to occasionally overlook subtle foreground-background transitions in complex scenes, leading to minor artifacts in regions with high-frequency textures. This highlights opportunities for further enhancing the balance between extreme binarization and accurate spatial-temporal representation.}

\section{Conclusion}
Real-time video matting on edge devices faces significant computational challenges, while current compact binarized video matting networks still suffer from accuracy and efficiency issues. To address these, we introduce BiVM, a resource-efficient binarized neural network for video matting. BiVM features elastic shortcuts and evolvable topologies in its encoder, improving representation quality for accurate predictions. It also sparsifies the binarized decoder by masking redundant computations, enhancing inference efficiency. Additionally, a localized binarization-aware mimicking framework is implemented to utilize matting-related representation from full-precision counterparts. Extensive experiments demonstrate that BiVM outperforms state-of-the-art binarized video matting models, achieving visual quality comparable to full-precision models while significantly reducing computation and storage needs. Real-world tests on ARM CPU mobile devices further highlight the potential of BiVM in resource-constrained scenarios.

\ifCLASSOPTIONcompsoc
  \section*{Acknowledgments}
\else
  \section*{Acknowledgment}
\fi

This work was supported by the National Natural Science Foundation of China (Nos. 62476018), the Swiss National Science Foundation (SNSF) project 200021E\_219943 Neuromorphic Attention Models for Event Data (NAMED), and the Beijing Municipal Science and Technology Project (No. Z231100010323002).

{ \small
\bibliography{egbib}
\bibliographystyle{ieee_fullname}
}

\clearpage


%

\begin{IEEEbiography}[{\includegraphics[width=1in,height=1.25in,clip,keepaspectratio]{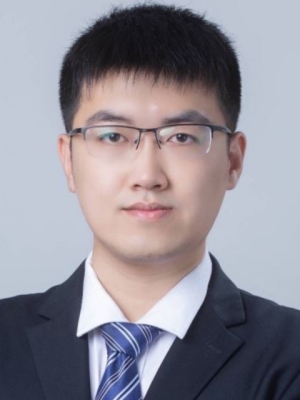}}]{Haotong Qin} is a Postdoctoral Researcher at the Center for Project-Based Learning (PBL) D-ITET at ETH Zürich. He received his B.S. and Ph.D. degrees at Beihang University. His research interests include model compression and deployment toward efficient deep learning in real-world scenarios. He has published over 50 papers in top-tier journals and conferences, such as IEEE TPAMI, IEEE TNNLS, IJCV, ICML, NeurIPS, ICLR, and CVPR. He serves as Guest Editor for Neural Networks, \textit{etc}., Area Chair in NeurIPS, ACM MM, AISTATS, \textit{etc}., and Reviewer for IEEE TPAMI, IEEE TIP, IEEE TNNLS, IJCV, NeurIPS, ICML, ICLR, \textit{etc}.
\end{IEEEbiography}

\begin{IEEEbiography}[{\includegraphics[width=1in,height=1.25in,clip,keepaspectratio]{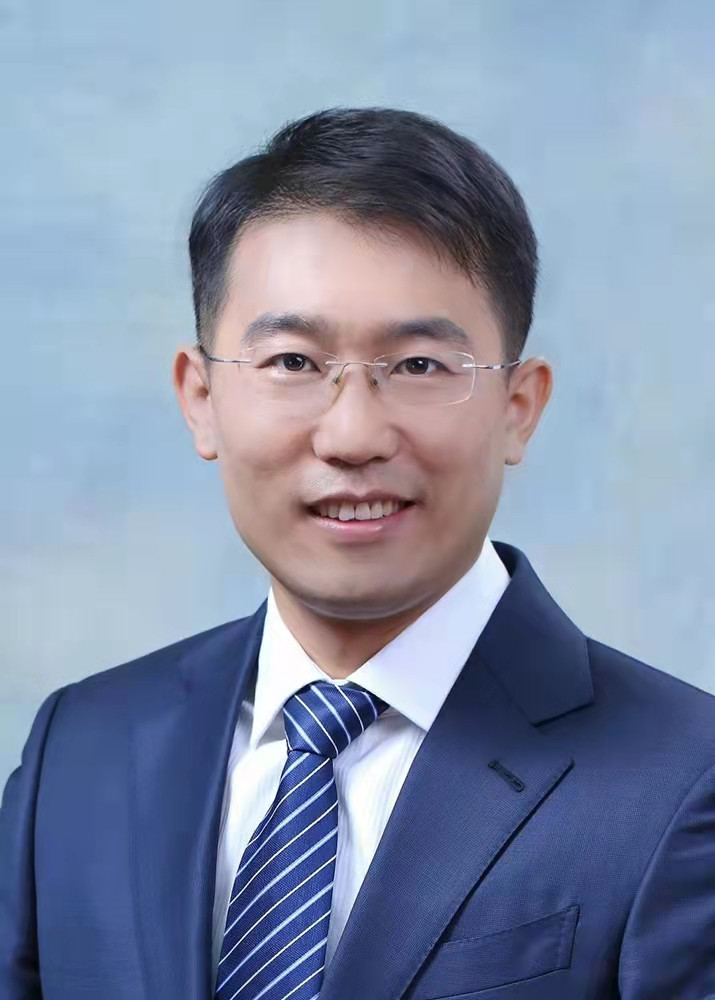}}]{Xianglong Liu} (Senior Member, IEEE) is currently a Professor, serves as the vice dean of the School of Computer Science and Engineering at Beihang University, and is also the deputy director of the State Key Laboratory of Complex and Critical Software Environment. He is the recipient of the China National Excellent Youth Science Fund. He has published over 100 papers in top conferences/journals in artificial intelligence and information security, such as NeurIPS, ICLR, CVPR, ICCV, CSS, and IJCV. He serves as Associate Editor and Guest for several SCI journals like Pattern Recognition and IET Image Processing and as Promotion Editor for journals like Frontiers of Computer Science and Acta Aeronautica et Astronautica Sinica. He serves as Area Chair in top conferences such as AAAI and ACM MM and has frequently organized workshops and competitions in conferences like CVPR, IJCAI, and AAAI.
\end{IEEEbiography}

\begin{IEEEbiography}[{\includegraphics[width=1in,height=1.25in,clip,keepaspectratio]{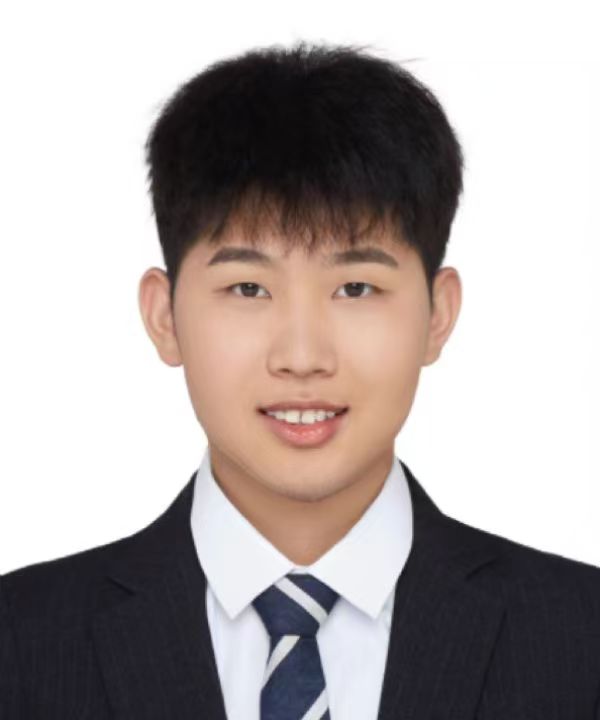}}]{Xudong Ma} received the B.E. degree in Network Engineering from the University of Electronic Science and Technology of China. He is currently working toward a PhD degree in the State Key Laboratory of Complex and Critical Software Environment at Beihang University under the supervision of Prof. Jie Luo. His research interests include model compression, knowledge graphs, and language models. He has published papers in NeurIPS, ICML, IJCAI, and IEEE TNNLS.
\end{IEEEbiography}

\begin{IEEEbiography}[{\includegraphics[width=1in,height=1.25in,clip,keepaspectratio]{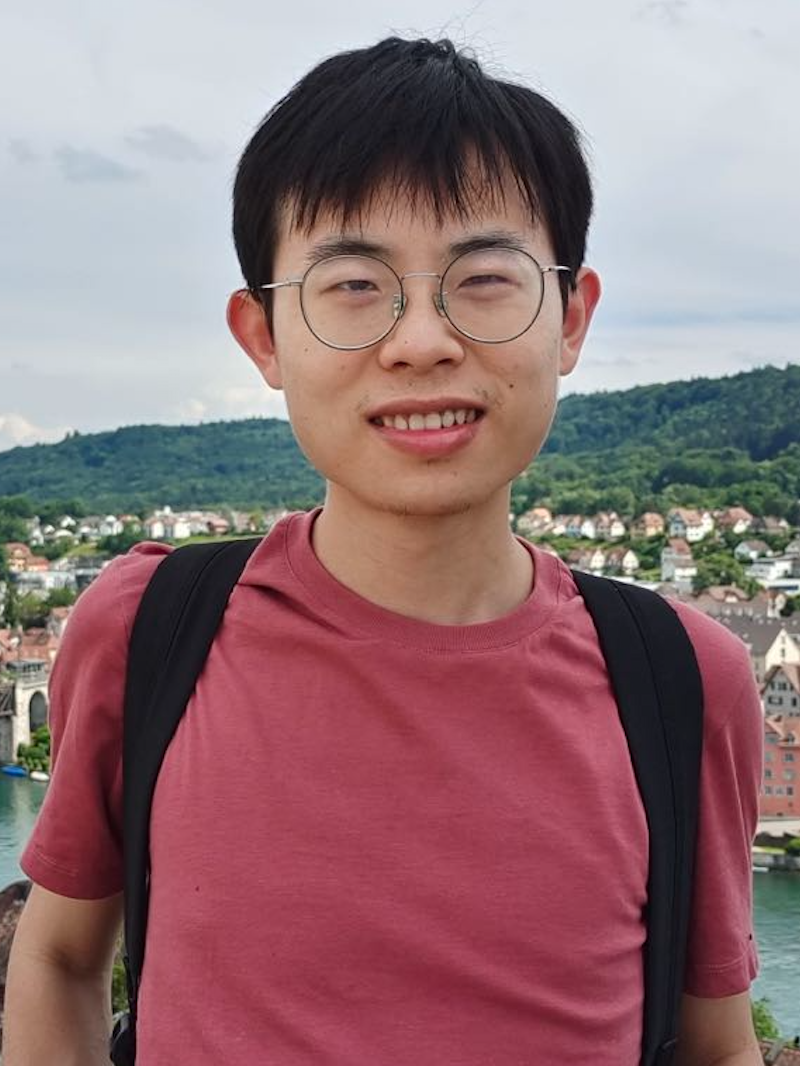}}]{Lei Ke} is a Postdoctoral Research Associate at the Machine Learning Department of Carnegie Mellon University.
Previously, he was a postdoc researcher at Computer Vision Lab of ETH Zurich in 2023.
He received his Ph.D. degree in the Department of Computer Science and Engineering at the Hong Kong University of Science and Technology. He also visited the Computer Vision Laboratory of ETH Zürich for 2 years during his PhD study. His research goal is to enable machines to achieve 4D and multi-modality scene understanding from videos/images and further perform robot actions. 
His leading research projects have accumulated over 7K stars on GitHub.
\end{IEEEbiography}

\begin{IEEEbiography}[{\includegraphics[width=1in,height=1.25in,clip,keepaspectratio]{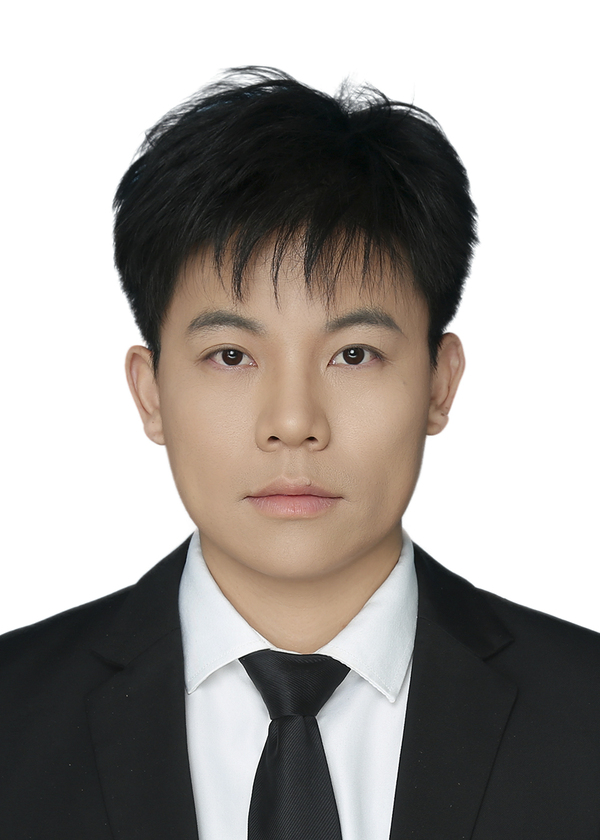}}]{Yulun Zhang} received a B.E. degree from the School of Electronic Engineering, Xidian University, China, in 2013, an M.E. degree from the Department of Automation, Tsinghua University, China, in 2017, and a Ph.D. degree from the Department of ECE, Northeastern University, USA, in 2021. He is an associate professor at Shanghai Jiao Tong University, Shanghai, China. He was a postdoctoral researcher at Computer Vision Lab, ETH Zürich, Switzerland. His research interests include image/video restoration and synthesis, biomedical image analysis, model compression, multimodal computing, large language model, and computational imaging. He is/was an Area Chair for CVPR, ICCV, ECCV, NeurIPS, ICML, ICLR, IJCAI, ACM MM, and a Senior Program Committee (SPC) member for IJCAI and AAAI.
\end{IEEEbiography}

\begin{IEEEbiography}[{\includegraphics[width=1in,height=1.25in,clip,keepaspectratio]{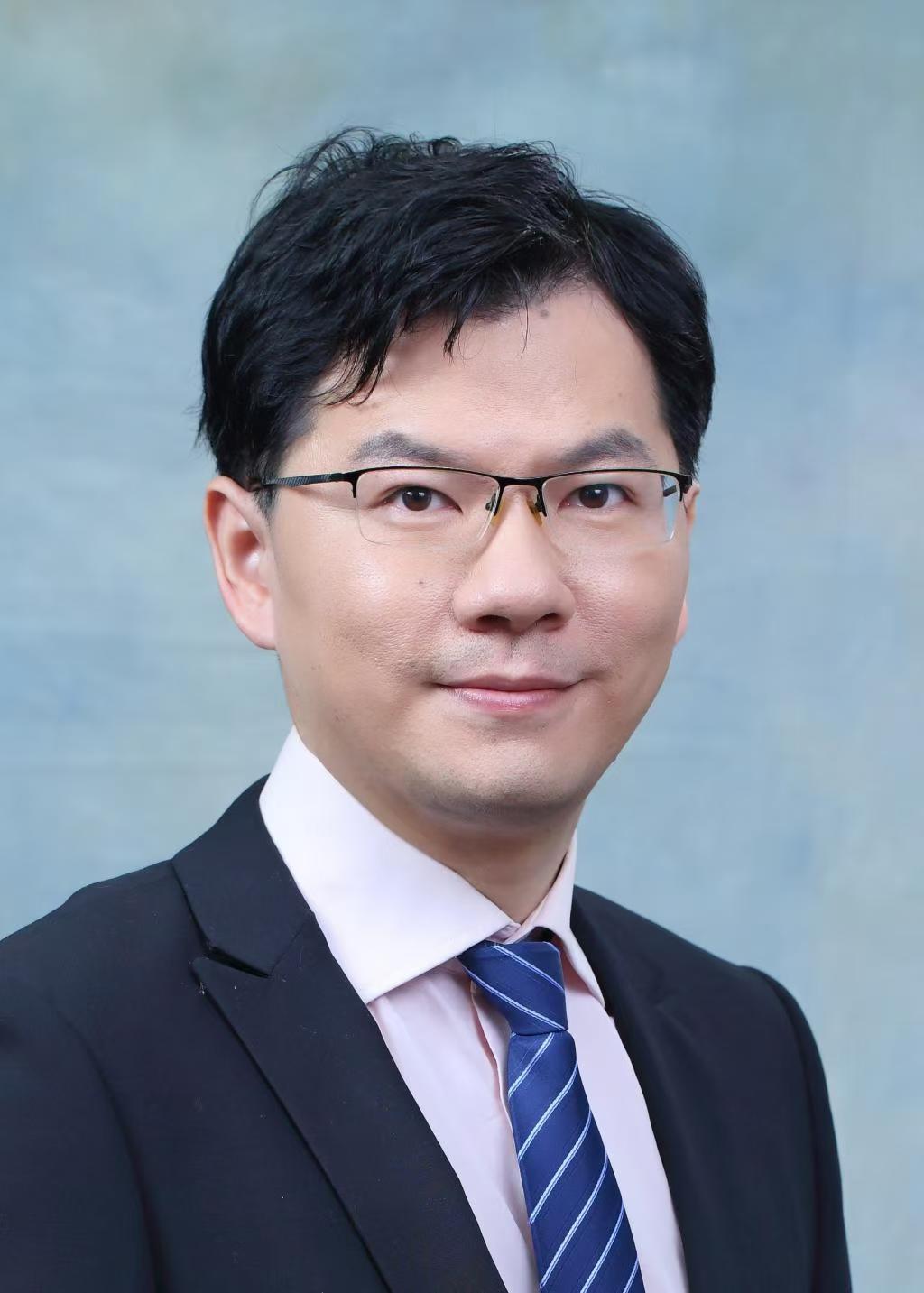}}]{Jie Luo} is now an Associate Professor in State Key Laboratory of Complex and Critical Software Environment, School of Computer Science and Engineering, Beihang University. He received a B.S. degree from the School of Mathematical Sciences, Peking University in 2003. And he received a Ph.D. degree from Beihang University in 2012, under the supervision of Prof. Wei Li and visited the University of Washington as a joint PhD student. His research interests include logic foundation for computer science, knowledge engineering, and crowd intelligence. He has published over 40 papers in top conferences and journals in artificial intelligence and information security, such as NeurIPS, ICML, CVPR, IJCAI, BiBM, KSEM, and PR.
\end{IEEEbiography}

\begin{IEEEbiography}[{\includegraphics[width=1in,height=1.25in,clip,keepaspectratio]{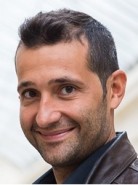}}]{Michele Magno} (Senior Member, IEEE) is currently a Privatdozent with the Department of Information Technology and Electrical Engineering (D-ITET), ETH Zürich, where he has been leading the D-ITET Center for project-based learning since 2020. He is also a senior member of IEEE. He received master's and Ph.D. degrees in electronic engineering from the University of Bologna, Italy, in 2004 and 2010, respectively. Since 2013, he has been with ETH Zürich, Switzerland, and has become a Visiting Lecturer or a Professor at the University of Nice Sophia, France; Enssat Lannion, France; the University of Bologna, Italy; and Mid University Sweden, Sweden; where is a Full Visiting Professor with the Department of Electrical Engineering. He has authored more than 300 papers in international journals and conferences. Some of his publications were awarded Best Paper awards at several IEEE conferences.
\end{IEEEbiography}

\clearpage

    \appendices

\section{Overview}
We provide additional information in this appendix. Section~\ref{sec:proof_of_thm} presents detailed proofs of the theorems discussed in the main text. Section~\ref{sec:arch} comprehensively explains our neural architecture of BiVM. Finally, in Section~\ref{sec:vis}, we include further results for in-depth comparison and visual examples of our composited matting data samples.

\section{Main Proofs}
\label{sec:proof_of_thm}
\subsection{Proof of Theorem 1}
\label{app:thm1}
\begin{thm}
Let $X$ be a normally distributed random variable. 
Consider the two functions
$\hat{T} = f(\operatorname{sign}(X))$ and $T = f(X)$,
where $\operatorname{sign}(\cdot)$ denotes the sign function and $f(x) = ax + b\ (a \neq 0)$, then we have $I(X; \hat{T}) \ll I(X; T)$.
\end{thm}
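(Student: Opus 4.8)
The plan is to contrast the two transformations by exploiting a single structural fact: $f(x)=ax+b$ with $a\neq 0$ is a bijection of $\mathbb{R}$, so $T=f(X)$ is an \emph{invertible} function of the continuous variable $X$, whereas $\hat{T}=f(\operatorname{sign}(X))$ first collapses $X$ onto the two-point set $\{-1,+1\}$. I would then compare the two mutual informations directly, since they live on opposite ends of the information scale.

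First I would analyze $T=f(X)=aX+b$. Because $a\neq 0$, $f$ is a bijection, so $X$ is perfectly recoverable from $T$. Writing $I(X;T)=h(T)-h(T\mid X)$ with $h(\cdot)$ the differential entropy, the marginal term $h(T)=h(X)+\ln|a|$ is finite for Gaussian $X$, while $h(T\mid X)=-\infty$ since $T$ is deterministic given $X$. Hence $I(X;T)=+\infty$: no information about $X$ is lost by the affine map, and the mutual information is unbounded.

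Next I would analyze $\hat{T}=f(\operatorname{sign}(X))$. By the definition of $\operatorname{sign}(\cdot)$ in Eq.~\eqref{eq:quantized}, $\operatorname{sign}(X)\in\{-1,+1\}$, so the composite takes only the two distinct values $f(+1)=a+b$ and $f(-1)=-a+b$ (distinct because $a\neq 0$). Thus $\hat{T}$ is a discrete, Bernoulli-type random variable, and the elementary bound $I(X;\hat{T})\le H(\hat{T})\le\ln 2$ caps its mutual information at a single bit (with equality to $\ln 2$ exactly when $X$ is centered, so that $P(X\ge 0)=\tfrac12$). Combining the two estimates yields $I(X;\hat{T})\le\ln 2<+\infty=I(X;T)$, which is the claimed $I(X;\hat{T})\ll I(X;T)$.

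The main obstacle I anticipate is not the algebra but making the comparison rigorous in the mixed continuous--discrete setting, where $I(X;T)$ is only formally infinite. To give the inequality clean meaning I would either adopt the differential-entropy decomposition above as the definition, or, to avoid infinities entirely, replace $X$ by its $\Delta$-quantization $X_\Delta$ and show that $I(X_\Delta;\hat{T})$ stays bounded by $\ln 2$ while $I(X_\Delta;T)$ grows like $h(X)-\ln\Delta\to\infty$ as $\Delta\to 0$; reading $\ll$ as this unbounded gap then makes the statement precise and matches the intuition that binarization throttles the information flow from $X$ to the representation.
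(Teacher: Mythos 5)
Your proposal is correct, and its overall architecture coincides with the paper's: both arguments show that $I(X;T)=H(X)=\infty$ for the continuous Gaussian (the paper makes this precise exactly as you suggest, via the $\Delta$-discretization $H(X)=\lim_{\Delta\to 0}H(X_\Delta)=-\int p\log p\,dx-\lim_{\Delta\to 0}\log\Delta$), and both cap $I(X;\hat{T})$ by a constant of order $\log 2$. Where you genuinely diverge is in the bound on the binarized side. You read $\hat{T}=f(\operatorname{sign}(X))=a\operatorname{sign}(X)+b$ literally as a two-point random variable and invoke the one-line bound $I(X;\hat{T})\le H(\hat{T})\le\log 2$; the paper instead analyzes $\hat{T}=(aX+b)\cdot\operatorname{sign}(X)$ (a different composite from the one in the theorem statement), decomposes $X=|X|\cdot\operatorname{sign}(X)$, and applies the chain rule together with the data-processing inequality to obtain $I(X;\hat{T})\le I(|X|;\hat{T})+I(\operatorname{sign}(X);\hat{T}\,\big|\,|X|)\le 2\log 2$. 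Your route is the more economical one and is the one actually licensed by the theorem as stated; the paper's route buys robustness to the alternative reading in which the affine map is applied before the sign is multiplied back in, at the cost of an extra factor of $2$ in the bound and a slightly heavier argument. Your closing remark about giving $\ll$ a precise meaning via the unbounded gap $h(X)-\log\Delta\to\infty$ versus a $\log 2$ ceiling also mirrors the paper's concluding observation that for $32$-bit representations the gap remains enormous.
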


\begin{proof}
First, consider $I(X; T)$:
$T = aX + b$ is a linear transformation of $X$ with $a \neq 0$. Therefore:
$$I(X; T) = H(X) - H(X|T) = H(X)$$
where $H(\cdot)$ denotes information entropy.
For $X\sim\mathcal{N}(\mu, \sigma^2)$:
\begin{equation}
\label{eq:th1_p0}
\begin{aligned}
H(X) = \lim\limits_{\Delta\rightarrow0}H(X_\Delta)= -\int p(x) \log p(x) dx - \lim\limits_{\Delta\rightarrow0}\log\Delta,
\end{aligned}
\end{equation}
where $p(x)= \frac{1}{\sqrt{2\pi\sigma^2}} \exp\left( -\frac{(x - \mu)^2}{2\sigma^2} \right)$, $X_\Delta$ is a discretized version of $X$ with bin width $\Delta$, and $\Delta$ is the discretization interval (or bin width) for approximating the entropy of a continuous random variable by a discrete one.
Following Eq.~\eqref{eq:th1_p0}, we thus have 
\begin{equation}
\label{eq:th1_p0.1}
H(X)=\infty.
\end{equation}

Now, consider $I(X; \hat{T})$:
$$\hat{T} = (aX + b) \cdot \text{sign}(X),$$
We can decompose $X$ into its sign and absolute value:
$$X = |X| \cdot \text{sign}(X).$$
Using the data processing inequality and the chain rule of mutual information:
\begin{equation}
\label{eq:th1_p1}
\begin{aligned}
    I(X; \hat{T}) \leqslant &I(|X|, \text{sign}(X); \hat{T})\\
    = &I(|X|; \hat{T}) + I(\text{sign}(X); \hat{T} | |X|).
\end{aligned}
\end{equation}
Note that $\text{sign}(X)$ is a function of $X$, so:
\begin{equation}
\label{eq:th1_p2}
I(\text{sign}(X); \hat{T} | |X|) \leqslant H(\text{sign}(X)) \leqslant \log(2).
\end{equation}
For $I(|X|; \hat{T})$, observe that given $|X|$, $\hat{T}$ provides at most 1 bit of additional information (the sign of $X$). Thus:
\begin{equation}
\label{eq:th1_p3}
I(|X|; \hat{T}) \leqslant H(|X|) - [H(|X|) - \log(2)] = \log(2).
\end{equation}
Combining steps Eq.~\eqref{eq:th1_p1}, Eq.~\eqref{eq:th1_p2}, and Eq.~\eqref{eq:th1_p3}:
\begin{equation}
\label{eq:th1_p3.1}
I(X; \hat{T}) \leqslant 2\log(2).
\end{equation}

Therefore, we have $I(X; \hat{T}) \ll I(X; T)$ considering Eq.~\eqref{eq:th1_p0.1} and Eq.~\eqref{eq:th1_p3.1}. Note that the $X$ is usually represented as 32-bit in practice. In this case, since $p(x) \log p(x) \leqslant 0$ and $\log\frac{1}{2^{32}}\ll 0$, $I(X; \hat{T}) \ll I(X; T)$ still hold.
\end{proof}

\subsection{Proof of Theorem 2}

\begin{thm}
Let $X$ be an input variable and $\{f_k : k \in \mathbb{Z} \cap [1, N]\}$ be a set of functions. Define the representation $T_n$ as:
$T_n = \prod_{k=1}^n f_k(X)$
Then, for any $i, j \in \mathbb{Z} \cap [1, N]$ with $i < j$, we have:
$I(X; T_i) \geqslant I(X; T_j)$.
\end{thm}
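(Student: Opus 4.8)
The plan is to recognize that the composite representation $T_n = \prod_{k=1}^{n} f_k(X)$ denotes the $n$-fold feedforward composition $f_n \circ f_{n-1} \circ \cdots \circ f_1$ applied to $X$, consistent with the layer-by-layer propagation through the binarized backbone discussed in Section~\ref{sec:Binarization-evokedEncoderDegradation}. Under this reading, the claim is an instance of the Data Processing Inequality (DPI): the monotone decay of $I(X; T_n)$ in $n$ is exactly the statement that information about $X$ cannot increase as it is pushed through additional deterministic transformations. So I would reduce the theorem to establishing the appropriate Markov structure and then quote the DPI.

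First I would fix $i < j$ and observe that, by the definition of $T_n$ as a composition, the later representation factors through the earlier one:
\begin{equation}
T_j = (f_j \circ f_{j-1} \circ \cdots \circ f_{i+1})(T_i) =: g(T_i),
\end{equation}
where $g$ depends only on $f_{i+1}, \dots, f_j$ and not on $X$ directly. Because $T_j$ is a deterministic function of $T_i$ alone, we have $p(T_j \mid T_i, X) = p(T_j \mid T_i)$, so the triple forms a Markov chain $X \to T_i \to T_j$. With this structure in hand, the conclusion is immediate from the DPI: for any such chain one has $I(X; T_i) \geqslant I(X; T_j)$.

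The main subtlety to verify carefully is the Markov property itself, namely that $T_j$ depends on $X$ only through $T_i$; this hinges entirely on interpreting $\prod$ as function composition rather than a pointwise product. Under a product reading, the extra factors $f_{i+1}(X), \dots, f_j(X)$ would reintroduce a direct dependence on $X$ given $T_i$ and break the chain, so this interpretive step is where care is needed. Given the feedforward interpretation no further computation is required; I would also note for completeness that equality $I(X; T_i) = I(X; T_j)$ holds when the intermediate map $g$ is injective on the support of $T_i$, since then $T_i$ is recoverable from $T_j$ and $I(X; T_i \mid T_j) = 0$.
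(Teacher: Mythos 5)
Your proposal is correct and takes essentially the same approach as the paper: the paper likewise reads $\prod$ as function composition (writing $T_j = \prod_{k=i+1}^{j} f_k(T_i)$), establishes the Markov chain $X \rightarrow T_i \rightarrow T_j$, and concludes via the data processing inequality. The only difference is that the paper proves the DPI inline using the chain rule for mutual information ($I(X;T_i,T_j) = I(X;T_i) + I(X;T_j|T_i) = I(X;T_j) + I(X;T_i|T_j)$ with $I(X;T_j|T_i)=0$), whereas you quote it as a known result.
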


\begin{proof}
First, note that for $i < j$, we can express $T_j$ as
$$T_j = \prod_{k=i+1}^j f_k(T_i) = \prod_{k=i+1}^j f_k\left( \prod_{q=1}^i f_q(X)\right).$$
Then we can form a Markov chain $X \rightarrow T_i \rightarrow T_j$.
This is because $T_j$ is a function of $T_i$ and $Y$, where $Y$ is a function of $X$. Given $T_i$, $T_j$ depends on $X$ only through $Y$.

By the chain rule of mutual information, we can write
\begin{equation}
\label{eq:th2_p0}
I(X;T_i,T_j) = I(X;T_i) + I(X;T_j|T_i),
\end{equation}
And we can also write
\begin{equation}
\label{eq:th2_p1}
I(X;T_i,T_j) = I(X;T_j) + I(X;T_i|T_j).
\end{equation}
Since $X \rightarrow T_i \rightarrow T_j$ forms a Markov chain, we have 
\begin{equation}
\label{eq:th2_p2}
I(X;T_j|T_i) = 0
\end{equation}
(given $T_i$, $T_j$ provides no additional information about $X$).
Thus, we have
\begin{equation}
\label{eq:th2_p3}
I(X;T_i,T_j) = I(X;T_i).
\end{equation}
Since mutual information is always non-negative, we have 
\begin{equation}
\label{eq:th2_p4}
I(X;T_i|T_j) \geqslant 0.
\end{equation}
Considering Eq.~\eqref{eq:th2_p1}, Eq.~\eqref{eq:th2_p3}, and Eq.~\eqref{eq:th2_p4}:
$$I(X;T_i) = I(X;T_i,T_j) \geqslant I(X;T_j)$$
Thus, we get
$$I(X;T_i) \geqslant I(X;T_j).$$

Therefore, we have proved that for any $i, j \in \mathbb{Z} \cap [0, N]$ with $i < j$, $I(X; T_i) \geqslant I(X; T_j)$.
\end{proof}

\section{Details of Network Architecture}
\label{sec:arch}

\begin{table*}[!ht]
    \centering
    \resizebox{1\columnwidth}{!}{
    \begin{tabular}{l|c|ccc|ccc|ccc}
    \toprule
        Module & Conv\tiny{32bit}  & ~ & EBB\_1 $\psi$ & ~ & ~ & EBB\_2 $\psi$ & ~   \\ \midrule
        Sub-Module & ~ & Sub-EBB $\theta^\uparrow$ & Sub-EBB $\theta^\uparrow$ & Sub-EBB $\theta^\downarrow$ & Sub-EBB $\theta^\uparrow$ & Sub-EBB $\theta^\uparrow$ & Sub-EBB $\theta^\downarrow$ \\ \midrule
        In/Out Channel & (3, 16) & (16, 32) & (32, 64) & (64, 32) & (32, 64) & (64, 128) & (128, 64) \\ \midrule
        Extracted Feature & $\frac{1}{2}$ & ~ & ~ & $\frac{1}{4}$ & ~ & ~ & ~  \\ \midrule\midrule
        Module & ~ & ~ & EBB\_3 $\psi$ & ~ & ~ & EBB\_4 $\psi$ & ~  \\ \midrule
        Sub-Module & ~ & Sub-EBB $\theta^-$ & Sub-EBB $\theta^\uparrow$ & Sub-EBB $\theta^\downarrow$ & Sub-EBB $\theta^\uparrow$ & Sub-EBB $\theta^\uparrow$ & Sub-EBB $\theta^\downarrow$ \\ \midrule
        In/Out Channel & ~ & (64, 64) & (64, 128) & (128, 64) & (64, 128) & (128, 256) & (256, 128) \\ \midrule
        Extracted Feature & ~ & ~ & ~ & $\frac{1}{8}$ & ~ & ~ \\ \midrule\midrule
        Module & ~ & ~ & EBB\_5 $\psi$ & ~ & ~ & ~ & ASPP  \\ \midrule
        Sub-Module & ~ & Sub-EBB $\theta^-$ & Sub-EBB $\theta^\uparrow$ & Sub-EBB $\theta^\downarrow$ & Sub-EBB $\theta^\uparrow$ & Sub-EBB $\theta^\uparrow$ &  \\ \midrule
        In/Out Channel & ~ & (128, 128) & (128, 256) & (256, 128) & (128, 256) & (256, 1024) & (1024, 128) \\ \midrule
        Extracted Feature & ~ & ~ & ~ & ~ & ~ & $\frac{1}{16}$ \\ \bottomrule
    \end{tabular}}
    \vspace{0.05in}
    \caption{The encoder structure in our BiVM is detailed here, where "Extracted Feature" refers to utilizing features derived by this sub-block within the encoder. Sub-EBB and EBB are labeled according to the notations provided in Eq.~\eqref{eq:sub-ebb} and Eq.~\eqref{eq:ebb} of our paper, representing the sub-block and the whole block, respectively.
    }
    \label{tab:encoder}
\end{table*}

\begin{table*}[!ht]
    \centering
    \resizebox{1\columnwidth}{!}{
    \begin{tabular}{l|c|c|c|c|c}
    \toprule
        Module & BottleNeck & SHB\_1 (Upsampling) & SHB\_2 (Upsampling) & SHB\_3 (Upsampling) & SHB\_4 (Output)  \\ \midrule
        Feature Scale & $\frac{1}{16}$ & $\frac{1}{8}$ & $\frac{1}{4}$ & $\frac{1}{2}$ & $\frac{1}{1}$  \\ \midrule
        Input Mask & ~ & $\boldsymbol{m}_\text{inc}^{\tau^*}$ & $\boldsymbol{m}_\text{inc}^{\tau^*}$ & $\boldsymbol{m}_\text{inc}^{\tau^*}$ & $\boldsymbol{m}_\text{inc}^{\tau^*}$  \\ \midrule
        Produced Mask & $\boldsymbol{m}_\text{inc}^{\tau^*}$  & ~ & ~ & ~ & ~ \\ \bottomrule
    \end{tabular}}
    \vspace{0.05in}
    \caption{The decoder structure in our BiVM is detailed here. SHBs are labeled according to our paper's notations in Eq.~\eqref{eq:SHB}. The sparse mask $\boldsymbol{m}_\text{inc}^{\tau^*}$ is employed to guide the decoder's computations, particularly focusing on the ``difficult" regions.
    }
    \label{tab:decoder}
\end{table*}

\begin{figure*}[t]
  \centering
  \includegraphics[width=\textwidth]{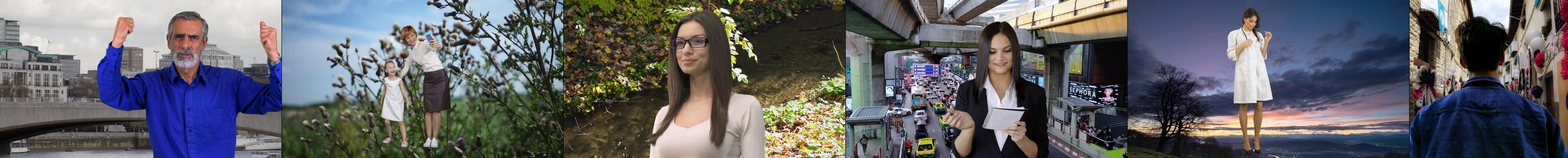}
  \caption{ Example testing samples.}
  \label{fig:test-sample}
\end{figure*}

As Table~\ref{tab:encoder} shows, the binarized encoder constructed by our EBB processes individual frames to generate feature maps across multiple spatial scales, ranging from $\frac{1}{2}$ down to $\frac{1}{16}$. Within each EBB, the feature channel count either remains unchanged or doubles in the first sub-block double again in the second sub-block and is then halved in the third sub-block. This configuration allows each EBB to extract features in a higher-dimensional channel space than the input. The dense computational structure ensures that binarized convolutions effectively capture high-quality features.
In contrast to the full-precision MobileNetV3 backbones that operate at a $\frac{1}{32}$ scale, we altered the final block by employing convolutions with a dilation rate of 2 and a stride of 1, following the principles outlined in~\cite{lin2022robust}. The resulting feature map at the $\frac{1}{16}$ scale is then passed to the LR-ASPP module, which reduces it to 128 channels.

For the decoder in Table~\ref{tab:decoder}, as described in our paper, the SHB is utilized in every block except the first to enhance computational efficiency. The binary mask used by the SHB is derived from the initial non-sparse binarized block, which operates at the smallest feature scale, thus generating the mask with minimal computational overhead. This design choice significantly boosts the decoder's efficiency.

In addition, the Deep Guided Filter (DGF) incorporates a limited number of binarized $1\times 1$ convolutions. For detailed specifications, please refer to~\cite{wu2018fast,lin2022robust}. The entire network is implemented and trained using PyTorch~\cite{paszke2019pytorch}.

\section{Additional Visualizations}
\label{sec:vis}

\begin{figure*}[t]
  \centering
  \includegraphics[width=\textwidth]{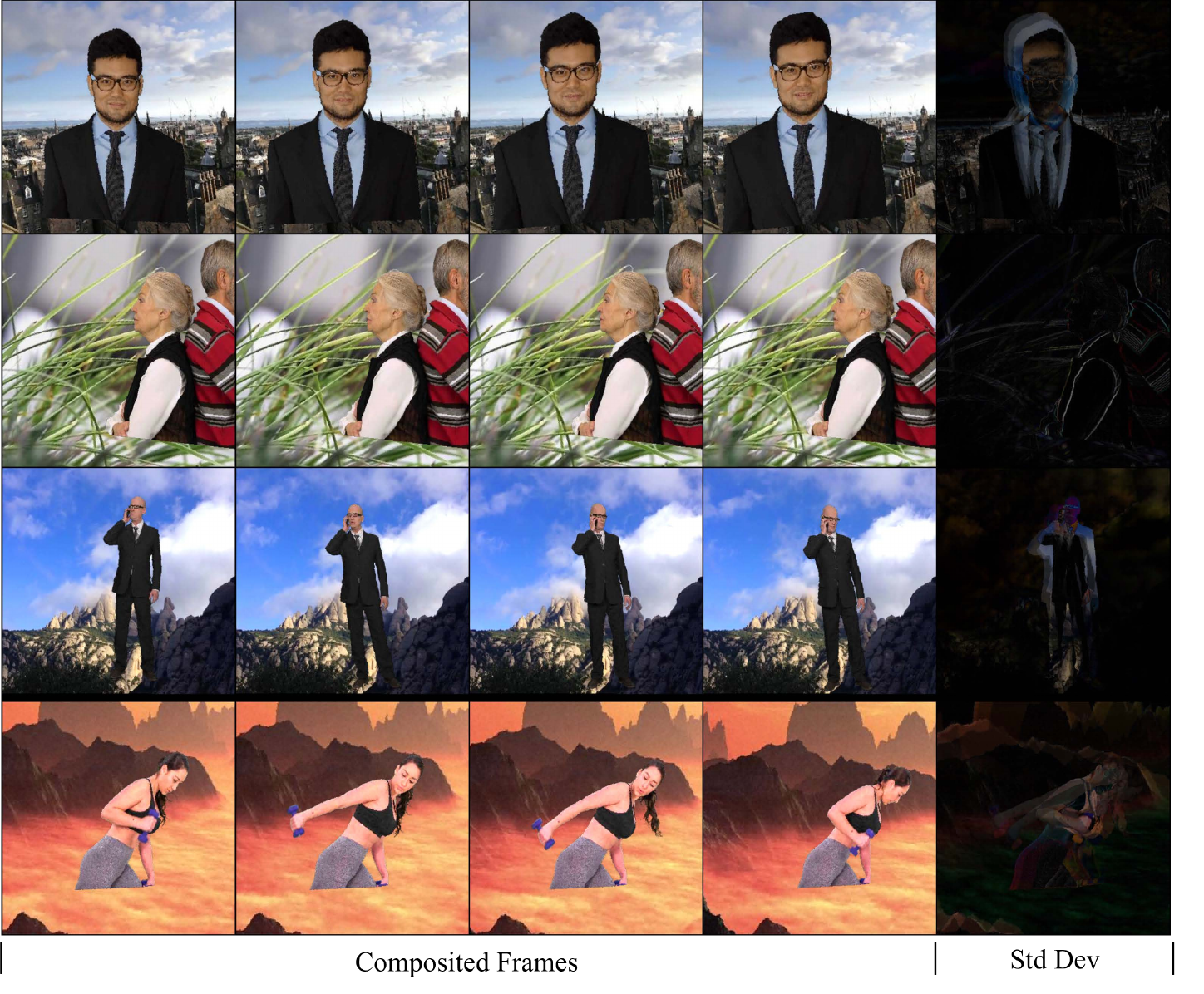}
  \caption{Composite training samples. The last column is the pixels' temporal standard deviation.}
  \label{fig:train-sample}
\end{figure*}

\begin{figure*}[t]
  \centering
  \includegraphics[width=\textwidth]{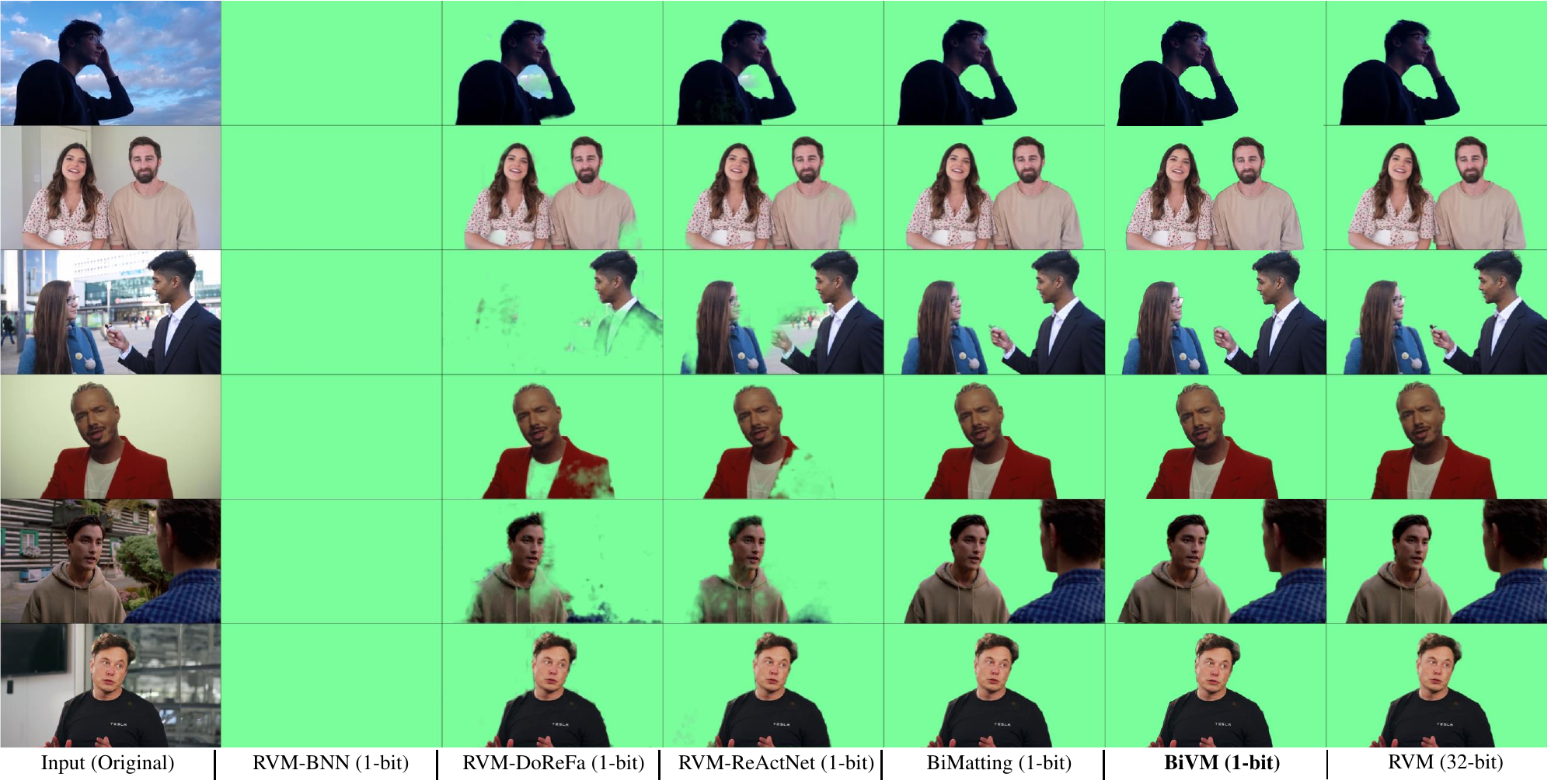}
  \caption{More visual results. Compared to 1-bit video matting networks using existing binarization methods, our BiVM surpasses them and achieves near full-precision performance. Note that the results of RVM-BNN indicate the model fully crashes.}
  \label{fig:comparsion}
\end{figure*}

\subsection{Composited Datasets}
\label{sec:data}

As guided by \cite{lin2022robust}, we constructed composite training and testing samples. Examples of composited training samples from the matting datasets are displayed in Figure~\ref{fig:train-sample}. These clips exhibit natural movements when composited with videos and artificial movements generated through motion augmentation.
For the testing samples in Figure~\ref{fig:test-sample}, motion augmentation was applied exclusively to the foreground and background elements of the images. This augmentation was limited to affine transformations. Additionally, the intensity of the augmentation was intentionally reduced compared to the training phase, ensuring that the testing samples maintained a high level of realism.

\subsection{Visual Results}
\label{subsec:vis_results}

Additional visual results are presented in Figure~\ref{fig:comparsion}, where the superiority of our BiVM over other binarization methods becomes even more apparent. Specifically, our method excels in preserving intricate edge details and achieving more accurate matting in local regions, which are critical for high-quality image processing tasks. The comparisons highlight how BiVM maintains sharpness and clarity in challenging areas, such as fine edges and complex textures, where other methods may falter. These improvements underscore the effectiveness of our approach in producing visually superior results across a range of scenarios.

{We also present a supplementary \href{https://drive.google.com/file/d/1Jk7wrX2re9DYG51eDa7wf-9H0r0ETMzu/view?usp=drive_link}{\textcolor{blue}{video}} to assess the temporal consistency for our proposed BiVM thoroughly. The supplemental video we provide contains four different scenarios, comparing the input (original), our BiVM (1-bit), RVM (32-bit), RVM-BNN (1-bit), RVM-ReActNet (1-bit), and BiMatting (1-bit, the previous best model, also the conference version method). We find that our BiVM exhibits outstanding temporal consistency and frame-wise matting performance. BiVM not only consistently outperforms existing 1-bit binarized models, including the previously best BiMatting model, but its visual quality even approaches the full-precision (32-bit) RVM. For example, at the 9th second of the video (Figure~\ref{fig:video}), none of the binarized models handle the region enclosed by the body and arm very well; even the BiMatting begins to produce imprecise segmentation in the late part of that second. By contrast, our 1-bit BiVM visually demonstrates excellent temporal coherence and segmentation accuracy similar to the original 32-bit RVM. In summary, the visual results in the video further highlight the advantages of our proposed BiVM.}

\begin{figure*}[!h]
  \centering
  \includegraphics[width=1\textwidth]{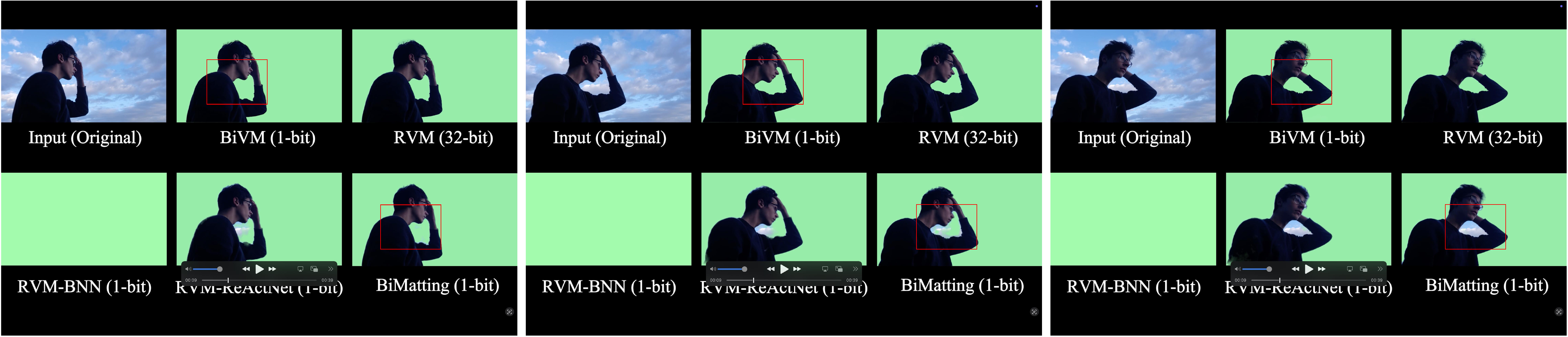}
  \caption{{Frames in the 9th second of the supplemental video.}}
  \label{fig:video}
\end{figure*}

\end{document}